\newif\ifshowappendix
\newtheorem{theorem}{Theorem}[]
\newtheorem{corollary}{Corollary}[]
\newtheorem{lemma}{Lemma}[]
\title{On Estimating the Gradient of the Expected Information Gain 
in Bayesian Experimental Design}
\author{
    Ziqiao Ao,~Jinglai Li
}
\begin{document}
\maketitle

\begin{abstract}
 Bayesian Experimental Design (BED), 
 which aims to find the optimal experimental conditions for Bayesian inference,
is usually posed as
to optimize the expected information gain (EIG). 
The gradient information is often needed 
  for efficient EIG optimization,
and as a result the ability to estimate 
the gradient of EIG is essential for BED problems. 
The primary goal 
of this work is to develop methods 
for estimating the gradient of EIG,
which, combined with the stochastic gradient descent algorithms,  
result in efficient optimization of EIG. 
Specifically, we first introduce a posterior expected representation of the EIG gradient with respect to the design variables. Based on this, we propose two  methods for estimating the EIG gradient, UEEG-MCMC that leverages posterior samples generated through Markov Chain Monte Carlo (MCMC) to estimate the EIG gradient, and BEEG-AP that focuses on achieving high simulation efficiency by repeatedly using parameter samples. Theoretical analysis and numerical studies illustrate that UEEG-MCMC 
is robust agains the actual EIG value, while BEEG-AP is more efficient when the EIG value to be optimized is small. Moreover, both methods show superior performance compared to several popular benchmarks in our numerical experiments.
\end{abstract}

\section{Introduction}
The advancement of science and engineering heavily relies on the acquisition of data through experiments. However, conducting experiments can be resource-intensive and time-consuming. To maximize the information gained from collected data and optimize experimental outcomes, researchers turn to Bayesian experimental design (BED) which offers a systematic and powerful framework for making informed decisions about experimental setups and selecting optimal conditions for data collection.
At its core, BED aims to strategically allocate resources to collect the most informative data, which leads to more accurate parameter estimation, model validation, and decision-making. It has been broadly applied in diverse scientific fields, including pharmacokinetic study \cite{ryan2014towards}, drug discovery \cite{lyu2019ultra}, systems biology \cite{kreutz2009systems}, compressed sensing \cite{seeger2008compressed} and physics simulations \cite{melendez2021designing}.

Mathematically, BED can be formulated as an optimization problem, where the objective is to maximize a specific function known as the utility function. The choice of the utility function is often driven by the purpose of the BED. In this paper, our focus lies on BED's application to precise parameter estimation. Various utility functions have been employed to address parameter estimation challenges, and some notable examples include Bayesian A-posterior precision, Bayesian D-posterior precision, quadratic loss and expected information gain (EIG) (see \cite{ryan2016review} for a review). While the EIG stands out for its exceptional theoretical appeal among these functions, its use has been a long-standing challenge historically due to the computational complexity associated with estimating the evidence or marginal likelihood, which is analytically intractable or computationally expensive to evaluate directly. Consequently, BED based on EIG was once restricted to special cases where analytical solutions or simplifying assumptions allowed for a tractable computation of the evidence \cite{borth1975total, lewi2009sequential}. Fortunately,  recent advancements in Artificial Intelligence (AI) tools, particularly in neural EIG estimators and automatic differentiation frameworks, have significantly alleviated the computational challenge, enabling the efficient implementation of EIG-based BED in a wide range of non-linear and high-dimensional problems.

Accurate estimation of EIG has been widely recognized as one of the most significant barriers of EIG-based BED. However, our primary interest is not in the exact value of the EIG, but rather in the design variables that maximize the value. Motivated by this viewpoint, an alternative strategy is directly estimating the gradient of EIG w.r.t. the design variables and then using stochastic gradient descent to search for the optimal design. In this paper, we propose two methods for estimating the EIG gradient. The first method, UEEG-MCMC, applies posterior samples generated by Markov Chain Monte Carlo (MCMC) to estimate the EIG gradient. It is shown effective across different scenarios, regardless of the ground-truth EIG values. The second method, BEEG-AP, is more simulation-efficient. However, its performance suffers when dealing with problems that have large ground-truth EIG values. The paper establishes a connection with nested Monte Carlo to analyze this behavior, shedding light on the limitations of BEEG-AP in such cases.

We validate the aforementioned attributes of the two proposed methods through a meticulous numerical experiment and diverse applications featuring varying expected EIG levels. Additionally, comprehensive comparisons are made with several bench-marking approaches, revealing the superior performance of our proposed methods.


\subsection{Related Work}
Rainforth et al.\cite{rainforth2023modern} provide a thorough review of modern methods for Bayesian experimental design. Early schemes for Bayesian experimental design used separate stages to estimate the Expected Information Gain (EIG) and optimize the design variables $\lambda$, both of which can be challenging tasks. For EIG estimation, the main computational challenge arises from the intractability of $p(y|\lambda)$ and several methods have been proposed to solve this problem. Notably,  Nested Monte Carlo (NMC) \cite{rainforth2018nesting, ryan2003estimating} emerged as a prominent method in this area.  Additionally, Variational EIG estimators (also known as variational mutual information estimators) \cite{foster2019variational, barber2004algorithm, donsker1975asymptotic,nguyen2010estimating} combined with deep learning techniques \cite{belghazi2018mutual,oord2018representation,alemi2016deep,song2019understanding} showed significant progress. Furthermore, alternative approaches for EIG estimation included using ratio estimation \cite{thomas2022likelihood} as proposed in \cite{kleinegesse2019efficient}, and bounding EIG from below by two or more entropies in the data space \cite{ao2020approximate} which are then be estimated by entropy estimation methods \cite{kraskov2004estimating}. A direct lower bound estimation for EIG was introduced in \cite{tsilifis2017efficient} for models with fixed normally distributed measurement noises, and the EIG Laplace approximation was proposed in \cite{long2013fast}. Regarding the optimization of $\lambda$, conventional gradient-free approaches such as Bayesian optimization (BO) \cite{snoek2012practical}, Simultaneous perturbation stochastic approximation (SPSA) \cite{spall1998overview}, simulation-based optimization (SBO)\cite{muller2005simulation}, and Nelder-Mead nonlinear simplex (NMNS)\cite{nelder1965simplex} were commonly employed. However, these methods encountered scalability issues when dealing with high-dimensional design variable spaces. 

Recent advancements have introduced efficient gradient-based approaches that leverage the reparameterization trick and automatic differentiation frameworks.  These methods, such as those proposed in \cite{foster2020unified, kleinegesse2020bayesian,zhang2021scalable,zaballa2023stochastic}, allow for simultaneous optimization of both the variational parameters and the design variables. Moreover, Goda et al. \cite{goda2022unbiased} presented a method that directly obtains an unbiased estimator of the EIG gradient using a randomized
version of multilevel Monte Carlo (MLMC) method \cite{rhee2015unbiased}. Furthermore, gradient estimators for implicit models \cite{li2017gradient, wen2021gradient, lim2020ar, shi2018spectral} with score matching techniques \cite{hyvarinen2005estimation, song2020sliced} have emerged as another avenue for optimizing $\lambda$ in a gradient-based way. Despite receiving limited attention in the Bayesian experimental design community, these methods hold promise and deserve further exploration.

\section{Preliminary Knowledge}
\label{maxentsec:pk}
\subsection{Problem Formulation}
This section defines the variables and functions involved as well as the primary objective of Bayesian experimental design. Let $\lambda\in \mathcal{D}$ be the design variables that can be controlled by users. The parameters to be inferred from the observed data $y$ are denoted by $\theta$ and $\pi_\mathrm{\theta}(\theta)$ denotes its prior that represents our knowledge or belief about the parameters before observing any data. $\epsilon$ represents the base model noises generated from a known distribution $\pi_\mathrm{\epsilon}(\epsilon)$. The process of simulating the observed data can be modeled by a sampling path
\begin{equation}
    y = g(\theta, \epsilon, \lambda).
\end{equation}
In this context, we assume the existence of a tractable likelihood function $l(y| \theta,\lambda)$, which is derived from the sampling path. However, it is important to note that the marginal likelihood $p(y|\lambda)$ is not available in closed-form. Under certain design $\lambda$,
the expected information gain (EIG) is defined as
    \begin{equation}\label{maxent:eq1}
        U(\lambda) =\mathbb{E}_{\pi_{\mathrm{\theta}(\theta)l(y|\theta, \lambda)}}[\log l(y|\theta, \lambda)]-\mathbb{E}_{p(y|\lambda)}[\log p(y|\lambda)].
    \end{equation}
It represents the expected amount of information that observations $y$ provide about $\theta$ and is well known as the mutual information \cite{cover1999elements} between $\theta$ and $y$ in information theory community. The objective of Bayesian experimental design is then to maximize the EIG over the design variable space $\mathcal{D}$
\begin{equation}
    \lambda^* = \underset{\lambda \in \mathcal{D}}{\arg\max }~U(\lambda).
\end{equation}

\subsection{Simulation Cost}
Estimating and optimizing the EIG entails generating observation samples and evaluating the corresponding likelihood values. This process can be computationally demanding, and its efficiency is of paramount importance in practical applications. In this section, we will illustrate how to quantify the simulation cost during the process when explicit models are considered.
For explicit models, the sampling path can typically be written as a hierarchical structure
\begin{equation}
    y = g(f(\theta, \lambda), \epsilon, \theta, \lambda),
\end{equation}
where $f$ is the forward model which dominates the main computational cost. Examples of the commonly used hierarchical structure include:
\begin{equation}
    \mathrm{Additive~noise:~} y = f(\theta, \lambda)+\sigma(\theta,\lambda)\epsilon.
\end{equation}
\begin{equation}
    \mathrm{Multiplicative~noise:~} y = f(\theta, \lambda)(1+\sigma(\theta,\lambda)\epsilon).
\end{equation}
\begin{equation}
    \mathrm{Mixture~of~noises:~} y \!=\! f(\theta\!, \lambda)(1+\sigma_1(\theta\!,\lambda)\epsilon_1)+\sigma_2(\theta\!,\lambda)\epsilon_2.
\end{equation}

For simplicity, we take the case of additive noise for example to analyze the simulation cost concerned in applications. Given a fixed parameter $\theta^*$, simulating multiple observations (e.g. $y^{(k)}=f(\theta^*, \lambda)+\sigma(\theta^*,\lambda)\epsilon^{(k)}$, $k=1,...,K$) only involves a single simulation (i.e. $f(\theta^*, \lambda)$) of the forward model. Likewise, since the likelihood function can be analytically written as 
\begin{equation}
    l(y| \theta,\lambda) = \pi_\mathrm{\epsilon}\Big(\frac{y-f(\theta,\lambda)}{\sigma(\theta,\lambda)}\Big),
\end{equation}
evaluating the values of the likelihood w.r.t. multiple observations (e.g. $l(y^{(k)}| \theta^*,\lambda)$, $k=1,...,K$) also requires only a single forward pass of $f$. Thus, the simulation cost of generating observations from and evaluating the likelihood is directly related to the number of different parameters involved. This observation is important for the analysis of simulation costs of estimators in the later sections.

\section{Posterior Expected Representations of the EIG Gradient}\label{maxentsec:reig}
In this section, we introduce a novel representation of the EIG gradient that offers new insights into the development of efficient gradient estimators. To start with, we analyze the difficulty in directly computing the EIG gradient w.r.t. the design variables $\lambda$. Estimating the gradient of Eq.~\eqref{maxent:eq1} directly with score-function estimators \cite{mohamed2020monte} could lead to high variance. As a result,  practitioners often turn to pathwise gradient estimators (also known as the reparameterization tricks) \cite{mohamed2020monte}, as an alternative strategy for estimating this gradient:  
\begin{equation}
    \begin{aligned}
        \nabla_\lambda U(\lambda)
        =&\nabla_\lambda \mathbb{E}_{\pi_{\mathrm{\theta}(\theta)\pi_{\epsilon}(\epsilon)}}[\log l(g(\theta,\epsilon,\lambda)|\theta, \lambda)]
        \\& \quad\quad\quad\quad
        -\nabla_\lambda \mathbb{E}_{\pi_{\mathrm{\theta}(\theta)\pi_{\epsilon}(\epsilon)}}[\log p(g(\theta,\epsilon,\lambda)|\lambda)]\\
        =& \mathbb{E}_{\pi_{\mathrm{\theta}(\theta)\pi_{\epsilon}(\epsilon)}}[\nabla_\lambda\log l(g(\theta,\epsilon,\lambda)|\theta, \lambda)]
        \\ & \quad\quad\quad\quad
        - \mathbb{E}_{\pi_{\mathrm{\theta}(\theta)\pi_{\epsilon}(\epsilon)}}[\nabla_\lambda\log p(g(\theta,\epsilon,\lambda)|\lambda)].
    \end{aligned}
\end{equation}

While  obtaining  $\nabla_\lambda g(\theta,\epsilon,\lambda)$ is typically straightforward using modern automatic differentiation
frameworks (e.g. Tensorflow \cite{abadi2016tensorflow} and Pytorch \cite{paszke2017automatic}), the score function $\nabla_y \log p(y|\lambda)$ usually does not have an analytical form, rendering the second term of the above estimator intractable. To address this challenge, we apply the key idea in \cite{brehmer2020mining}, which involves using the tractable score of likelihood $\nabla_y \log l(y|\theta,\lambda)$ to estimate the intractable score function $\nabla_y \log p(y|\lambda)$. This can be summarized as the following Lemma~\ref{maxent:lemma1}.

\begin{lemma}\label{maxent:lemma1}
    The gradient of the logarithm of the marginal density w.r.t. the experimental condition $\lambda$ admits the following representation:
    \begin{equation}
        \begin{aligned}
            &\nabla_\lambda \log p(g(\theta,\epsilon,\lambda))|\lambda)\\
            &\quad\quad\quad\quad 
            =
            -\mathbb{E}_{q(\theta'|g(\theta,\epsilon,\lambda),\lambda)}[\nabla_\lambda \log l(g(\theta,\epsilon,\lambda)|\theta', \lambda)],
        \end{aligned}
    \end{equation}
    where $q(\theta'|y,\lambda)\propto \pi_{\mathrm{\theta}}(\theta')l(y|\theta',\lambda)$ is the posterior density of parameters given the observation sample $y$.
\end{lemma}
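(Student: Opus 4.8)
The plan is to obtain the representation from the integral definition of the marginal density together with the log-derivative (Fisher) identity, handling the reparameterized argument $y=g(\theta,\epsilon,\lambda)$ by the chain rule. The cleanest route is to first establish a general ``score-transfer'' identity: for any variable $z$ on which the likelihood depends but the prior does not, $\nabla_z \log p(y|\lambda) = \mathbb{E}_{q(\theta'|y,\lambda)}[\nabla_z \log l(y|\theta',\lambda)]$. I would then apply this once with $z$ equal to $\lambda$ (its explicit occurrence in the likelihood) and once with $z$ equal to $y$, and recombine the two contributions through the chain rule induced by $y=g(\theta,\epsilon,\lambda)$.

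For the score-transfer identity I would start from $p(y|\lambda)=\int \pi_\theta(\theta')\,l(y|\theta',\lambda)\,d\theta'$ and write $\nabla_z \log p(y|\lambda) = p(y|\lambda)^{-1}\,\nabla_z p(y|\lambda)$. Since $\pi_\theta$ does not depend on $z$, I would move $\nabla_z$ inside the integral to get $p(y|\lambda)^{-1}\int \pi_\theta(\theta')\,\nabla_z l(y|\theta',\lambda)\,d\theta'$, then use the log-derivative identity $\nabla_z l = l\,\nabla_z \log l$ so that the factor $\pi_\theta(\theta')\,l(y|\theta',\lambda)/p(y|\lambda)$ appears explicitly in the integrand. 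By the definition $q(\theta'|y,\lambda)\propto \pi_\theta(\theta')\,l(y|\theta',\lambda)$ this factor is exactly the posterior density, and the integral collapses to $\mathbb{E}_{q(\theta'|y,\lambda)}[\nabla_z \log l(y|\theta',\lambda)]$. Taking $z=y$ and $z=\lambda$ and then assembling the total derivative, $\tfrac{d}{d\lambda}\log p(g(\theta,\epsilon,\lambda)|\lambda)=\nabla_y\log p\cdot\nabla_\lambda g + \partial_\lambda \log p$, would yield the posterior-expectation form of the statement, with the posterior evaluated at $y=g(\theta,\epsilon,\lambda)$.

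The main obstacle is the analytic justification of interchanging $\nabla_z$ and the integral over $\theta'$: this requires a dominated-convergence (or uniform-integrability) argument controlling $\pi_\theta(\theta')\,\nabla_z l(y|\theta',\lambda)$ uniformly in a neighbourhood of the evaluation point, together with the standing assumption $p(y|\lambda)>0$ so that $q(\theta'|y,\lambda)$ is well defined. The remaining care is bookkeeping: one must correctly combine the $y$-score contribution $\nabla_y\log p\cdot\nabla_\lambda g$ with the explicit-$\lambda$ contribution $\partial_\lambda\log p$ when differentiating through $g$, which is precisely what fixes the final form and sign of the identity; once the interchange of differentiation and integration is licensed, this recombination is mechanical.
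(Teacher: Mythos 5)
Your proposal is correct, and its engine is the same as the paper's: write $p(y|\lambda)=\int \pi_\theta(\theta')\,l(y|\theta',\lambda)\,d\theta'$, differentiate under the integral, apply $\nabla l = l\,\nabla\log l$, and recognize the weight $\pi_\theta(\theta')l(y|\theta',\lambda)/p(y|\lambda)$ as the posterior $q(\theta'|y,\lambda)$. Where you differ is in the bookkeeping of the $\lambda$-dependence: the paper differentiates $\log p(y|\lambda)$ with $y$ held fixed and then ``plugs $y=g(\theta,\epsilon,\lambda)$ back in,'' which silently identifies the partial derivative $\partial_\lambda \log p(y|\lambda)\big|_{y=g(\theta,\epsilon,\lambda)}$ with the total derivative $\nabla_\lambda\log p(g(\theta,\epsilon,\lambda)|\lambda)$, whereas you apply the score-transfer identity separately for $z=y$ and $z=\lambda$ and reassemble by the chain rule. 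Your version is the more careful one, and it is what the downstream use actually requires: in Theorem~1 the reparameterization trick produces the total derivative inside $\mathbb{E}_{\pi_\theta\pi_\epsilon}$. Your recombination goes through because $\nabla_\lambda g$ does not involve $\theta'$, so linearity of $\mathbb{E}_q$ gives $\mathbb{E}_q[\nabla_y\log l]\cdot\nabla_\lambda g+\mathbb{E}_q[\partial_\lambda\log l]=\mathbb{E}_q[\nabla_\lambda\log l(g(\theta,\epsilon,\lambda)|\theta',\lambda)]$ with total derivatives throughout. Your point about justifying the derivative--integral interchange (dominated convergence, $p(y|\lambda)>0$) is a genuine regularity issue that the paper passes over in silence.

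One caveat on the sign: carried out correctly, both your route and the paper's own proof end with $\nabla_\lambda \log p(g(\theta,\epsilon,\lambda)|\lambda) = +\,\mathbb{E}_{q(\theta'|g(\theta,\epsilon,\lambda),\lambda)}[\nabla_\lambda\log l(g(\theta,\epsilon,\lambda)|\theta',\lambda)]$, with no leading minus. The minus sign in the lemma as stated is a typo: it contradicts the final line of the paper's appendix proof, and it would break Theorem~1, where the minus enters only through $H(p(y|\lambda))=-\mathbb{E}[\log p(y|\lambda)]$. So when you say your derivation ``yields the posterior-expectation form of the statement,'' be precise that it yields the form without the minus sign; do not adjust your algebra to reproduce the sign as printed.
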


Using this lemma, we derive an entropy gradient estimator for the marginal distribution of $y$ as stated in Theorem~\ref{maxent:thm1}.

\begin{theorem}\label{maxent:thm1}
    The gradient of the entropy $H(p(y| \lambda))$ w.r.t. the experimental condition $\lambda$ satisfies
    \begin{equation}
        \begin{aligned}
            &\nabla_\lambda H(p(y|\lambda)) \\
            =& -\mathbb{E}_{\pi_\mathrm{\theta}(\theta)\pi_\mathrm{\epsilon}(\epsilon) q(\theta'|g(\theta,\epsilon,\lambda),\lambda)}[\nabla_\lambda\log l(g(\theta,\epsilon,\lambda)|\theta',\lambda)],
        \end{aligned}
    \end{equation}
    where $q(\theta'|y,\lambda)\propto \pi_{\mathrm{\theta}}(\theta')l(y|\theta',\lambda)$ is the posterior density of parameters given the observation sample $y$.
\end{theorem}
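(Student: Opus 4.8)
My plan is to reduce the gradient of the marginal entropy to the inner gradient representation already furnished by Lemma~\ref{maxent:lemma1}, after first rewriting $H(p(y|\lambda))$ in a pathwise form in which all dependence on $\lambda$ is confined to the integrand. I would begin from $H(p(y|\lambda)) = -\mathbb{E}_{p(y|\lambda)}[\log p(y|\lambda)]$ and re-express the outer expectation over the intractable marginal as an expectation over the base variables. Because every draw $y\sim p(\cdot|\lambda)$ is realized as $y = g(\theta,\epsilon,\lambda)$ with $\theta\sim\pi_\theta$ and $\epsilon\sim\pi_\epsilon$, the change of variables (law of the unconscious statistician), applied for each fixed $\lambda$, gives
\begin{equation}
    H(p(y|\lambda)) = -\mathbb{E}_{\pi_\theta(\theta)\pi_\epsilon(\epsilon)}[\log p(g(\theta,\epsilon,\lambda)|\lambda)].
\end{equation}
The value of this rewriting is that the integrating measure $\pi_\theta(\theta)\pi_\epsilon(\epsilon)$ no longer depends on $\lambda$, so differentiation and expectation may be interchanged.

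I would then differentiate under the expectation sign, converting $\nabla_\lambda H$ into the expectation of the total derivative of the integrand,
\begin{equation}
    \nabla_\lambda H(p(y|\lambda)) = -\mathbb{E}_{\pi_\theta(\theta)\pi_\epsilon(\epsilon)}[\nabla_\lambda \log p(g(\theta,\epsilon,\lambda)|\lambda)],
\end{equation}
where $\nabla_\lambda \log p(g(\theta,\epsilon,\lambda)|\lambda)$ collects both the pathwise contribution through the argument $g(\theta,\epsilon,\lambda)$ and the explicit contribution through the conditioning on $\lambda$. This inner gradient is precisely the quantity that Lemma~\ref{maxent:lemma1} rewrites as a posterior expectation of the tractable likelihood score $\nabla_\lambda \log l(g(\theta,\epsilon,\lambda)|\theta',\lambda)$ against $q(\theta'|g(\theta,\epsilon,\lambda),\lambda)$. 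Substituting that representation and fusing the outer expectation over $(\theta,\epsilon)$ with the inner posterior expectation into a single expectation over the product law $\pi_\theta(\theta)\pi_\epsilon(\epsilon)q(\theta'|g(\theta,\epsilon,\lambda),\lambda)$ delivers the stated identity.

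I expect the difficulty to reside in the two analytic manipulations rather than in any conceptual step. The reparameterization of the entropy warrants care because the integrand $\log p(\cdot|\lambda)$ is the logarithm of the very density whose samples are being averaged; I would therefore apply the pushforward identity at each fixed $\lambda$ before differentiating, and observe that the ``self-referential'' explicit-$\lambda$ dependence introduces no spurious term, since a direct differentiation of $-\int p\log p\,dy$ also produces $-\int p\,\nabla_\lambda\log p\,dy$, which vanishes because the score has zero mean, $\int \nabla_\lambda p(y|\lambda)\,dy = 0$; this confirms that the pathwise form captures the whole gradient. Interchanging $\nabla_\lambda$ with the expectation requires a standard domination/regularity hypothesis --- integrability of the likelihood score uniformly near $\lambda$ and smoothness of $g$ in $\lambda$ --- which I would state as a mild assumption. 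The final substitution of Lemma~\ref{maxent:lemma1} is then purely algebraic, and the only remaining care is sign bookkeeping, which I would pin down by re-deriving the inner representation from $p(y|\lambda)=\int\pi_\theta(\theta')l(y|\theta',\lambda)\,d\theta'$ via the ratio $\nabla_\lambda p/p$ and the resulting posterior weight $q$, ensuring that the minus sign of $H=-\mathbb{E}[\log p]$ survives into the final identity.
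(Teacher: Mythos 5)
Your proposal is correct and follows essentially the same route as the paper's own proof: reparameterize the entropy as an expectation over $\pi_\mathrm{\theta}(\theta)\pi_\mathrm{\epsilon}(\epsilon)$, differentiate under the expectation sign, substitute the inner representation from Lemma~\ref{maxent:lemma1}, and fuse the two expectations. Your insistence on re-deriving the inner representation for sign bookkeeping is well placed: the statement of Lemma~\ref{maxent:lemma1} carries a minus sign while its own derivation (and what Theorem~\ref{maxent:thm1} actually requires) gives $\nabla_\lambda \log p(g(\theta,\epsilon,\lambda)|\lambda) = +\mathbb{E}_{q(\theta'|g(\theta,\epsilon,\lambda),\lambda)}[\nabla_\lambda \log l(g(\theta,\epsilon,\lambda)|\theta',\lambda)]$, so substituting the lemma as literally stated would flip the sign of the final identity.
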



Using Theorem~\ref{maxent:thm1}, we can get a posterior expected representation of the EIG gradient, as stated in the following Corollary~\ref{matent:corollary1}.

\begin{corollary}\label{matent:corollary1}
    The gradient of the EIG $U(\lambda)$ w.r.t. the experimental condition $\lambda$ satisfies
    \begin{equation}\label{maxent:eq2}
        \begin{aligned}
            &\nabla_\lambda U(\lambda)\\
            & \quad=
        \mathbb{E}_{\pi_\mathrm{\theta}(\theta)\pi_\mathrm{\epsilon}(\epsilon) q(\theta'|g(\theta,\epsilon,\lambda),\lambda)}[\nabla_\lambda\log l(g(\theta,\epsilon,\lambda)|\theta,\lambda)
        \\& \quad\quad\quad\quad\quad\quad\quad\quad\quad\quad\quad\quad
        -\nabla_\lambda\log l(g(\theta,\epsilon,\lambda)|\theta',\lambda)],
        \end{aligned}
    \end{equation}
    where $q(\theta'|y,\lambda)\propto \pi_{\mathrm{\theta}}(\theta')l(y|\theta',\lambda)$ is the posterior density of parameters given the observation sample $y$.
\end{corollary}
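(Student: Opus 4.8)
The plan is to split $U(\lambda)$ into its two constituent expectations and differentiate each separately, so that the analytic work already carried out in Theorem~\ref{maxent:thm1} can be invoked directly. First I would observe that the second term in the definition~\eqref{maxent:eq1} is precisely the differential entropy $H(p(y|\lambda)) = -\mathbb{E}_{p(y|\lambda)}[\log p(y|\lambda)]$, so that
\[
U(\lambda) = \mathbb{E}_{\pi_\mathrm{\theta}(\theta)l(y|\theta,\lambda)}[\log l(y|\theta,\lambda)] + H(p(y|\lambda)).
\]
This reduces the problem to computing the gradient of each of the two pieces and then recombining them.

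For the first piece I would rewrite the expectation using the sampling path $y = g(\theta,\epsilon,\lambda)$, so that the measure $\pi_\mathrm{\theta}(\theta)\pi_\mathrm{\epsilon}(\epsilon)$ no longer depends on $\lambda$. With a $\lambda$-free measure the derivative passes inside the expectation by the standard pathwise (reparameterization) argument, yielding $\mathbb{E}_{\pi_\mathrm{\theta}(\theta)\pi_\mathrm{\epsilon}(\epsilon)}[\nabla_\lambda\log l(g(\theta,\epsilon,\lambda)|\theta,\lambda)]$. For the second piece I would simply invoke Theorem~\ref{maxent:thm1}, which already expresses $\nabla_\lambda H(p(y|\lambda))$ as the negative posterior-averaged score $-\mathbb{E}_{\pi_\mathrm{\theta}(\theta)\pi_\mathrm{\epsilon}(\epsilon)q(\theta'|g(\theta,\epsilon,\lambda),\lambda)}[\nabla_\lambda\log l(g(\theta,\epsilon,\lambda)|\theta',\lambda)]$.

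The concluding step is to merge the two expressions under a single measure. Because the integrand $\nabla_\lambda\log l(g(\theta,\epsilon,\lambda)|\theta,\lambda)$ of the first piece does not depend on the dummy variable $\theta'$, I can harmlessly append the posterior factor $q(\theta'|g(\theta,\epsilon,\lambda),\lambda)$ to its measure, since this factor integrates to one over $\theta'$. Both terms then share the common product measure $\pi_\mathrm{\theta}(\theta)\pi_\mathrm{\epsilon}(\epsilon)q(\theta'|g(\theta,\epsilon,\lambda),\lambda)$, and combining them by linearity of expectation produces exactly Eq.~\eqref{maxent:eq2}.

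I do not anticipate a genuine obstacle here: essentially all the analytic subtlety—namely exchanging the $\lambda$-derivative with the $y$-integral in the entropy term, and replacing the intractable marginal score $\nabla_y\log p(y|\lambda)$ by a posterior-averaged likelihood score—has already been absorbed into Lemma~\ref{maxent:lemma1} and Theorem~\ref{maxent:thm1}. The only point requiring minor care is the bookkeeping of measures, i.e. legitimately re-expressing the first expectation over the augmented space that includes $\theta'$; this is justified purely by the normalization of the posterior $q(\theta'|y,\lambda)$ and demands no regularity assumptions beyond those already needed to establish Theorem~\ref{maxent:thm1}.
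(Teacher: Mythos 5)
Your proposal is correct and follows essentially the same route as the paper's own proof: decompose the EIG into the conditional-likelihood term plus the marginal entropy, apply the reparameterization trick to the former, invoke Theorem~\ref{maxent:thm1} for the latter, and merge the two under the common measure by appending the normalized posterior factor $q(\theta'|g(\theta,\epsilon,\lambda),\lambda)$. The measure-bookkeeping step you flag as the only delicate point is exactly the second equality in the paper's displayed computation, so there is nothing to add.
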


\section{Estimating the EIG Gradient}\label{maxentsec:eeig}
Building upon the posterior expected representation of the EIG gradient in Eq.~\eqref{maxent:eq2}, we propose two estimators of EIG gradient. When integrated with stochastic gradient descent algorithms, these estimators seamlessly evolve into the respective algorithms for Bayesian experimental design. For simplicity, we denote the observation samples generated from the sampling path as $y^{(i)}(\lambda) = g(\theta^{(i)},\epsilon^{(i)},\lambda)$ throughout this section.

\subsection{Unbiased Estimation of EIG Gradient with Markov Chain Monte Carlo}
The most straightforward method to estimate the expectation in Eq.~\eqref{maxent:eq2} is utilizing MCMC schemes. Specifically, giving samples $\{\theta^{(i)}\}_{i=1}^M$ and $\{\epsilon^{(i)}\}_{i=1}^M$ drawn from $\pi_\mathrm{\theta}(\theta)\pi_\mathrm{\epsilon}(\epsilon)$, the expectation in Eq.~\eqref{maxent:eq2} can be estimated by Monte Carlo average as 
\begin{equation}\label{maxent:eq3}
    \begin{aligned}
        \nabla_\lambda U(\lambda) &\approx 
    \frac{1}{M}\sum_{i=1}^M\nabla_\lambda\log l(y^{(i)}(\lambda)|\theta^{(i)},\lambda)
    \\&
    -\frac{1}{M}\sum_{i=1}^M \mathbb{E}_{q(\theta'|y^{(i)}(\lambda),\lambda)}[\nabla_\lambda\log l(y^{(i)}(\lambda)|\theta',\lambda)].
    \end{aligned}
\end{equation}
Ideally if we can draw samples $\{\theta^{(i,j)}\}_{j=1}^N$ exactly from the posterior $q(\theta'|y^{(i)}(\lambda),\lambda)$, 
we can obtain an unbiased estimator of 
$\mathbb{E}_{q(\theta'|y^{(i)}(\lambda),\lambda)}[\nabla_\lambda\log l(y^{(i)}(\lambda)|\theta',\lambda)] $:
\begin{equation}\label{maxent:eq4}
    \begin{aligned}
        &\mathbb{E}_{q(\theta'|y^{(i)}(\lambda),\lambda)}[\nabla_\lambda\log l(y^{(i)}(\lambda)|\theta',\lambda)] 
        \\&\quad\quad\quad\quad\quad\quad\quad\quad
        \approx  \frac{1}{N}\sum_{j=1}^N \nabla_\lambda\log l(y^{(i)}(\lambda)|\theta^{(i,j)},\lambda).
    \end{aligned}
\end{equation}
Combining Eq.~\eqref{maxent:eq3} and Eq.~\eqref{maxent:eq4} yields an unbiased estimator of EIG gradient in theory.
In reality however, one often relies on Markov Chain Monte Carlo (MCMC) methods to sample the posterior distribution, which draws biased samples from the posterior distribution. 
We refer to the method as unbiased estimation of EIG gradient with MCMC (UEEG-MCMC). The simulation cost of a single gradient estimation for UEEG-MCMC is $O(M\times L)$, where $L$ is the number of simulations used to perform MCMC. 
We reinstate that a finite-length MCMC can not produce unbiased samples from the posterior and as such it causes bias in the gradient estimator. 
Nevertheless, we emphasize that the bias lies in the samples 
and the estimator itself is unbiased provided that samples are generated perfectly from the posterior. 
As will be shown in the numerical examples, the bias due to MCMC 
 is often much smaller than those in other methods especially for problems with large EIG values.  
Moreover, while we adopt MCMC for sampling the posterior here, the proposed method can be implemented 
with any sampling methods. To this end, if more effective sampling methods are available, they can be used instead of MCMC to reduce the estimation bias. 


\subsection{Biased Estimation of EIG Gradient with Atomic Priors}
Generating an observation from or evaluating the likelihood for each new parameter sample requires to simulate the physical model considered once more. For expensive physical models, this constitutes
the most significant computational cost.

In this section, we show how to obtain a simulation-efficient approach using \textit{atomic} priors.
Suppose a finite set of parameter-noise pairs $\Omega = \{(\theta^{(i)}, \epsilon^{(i)})\}_{i=1}^M$ are generated, where $(\theta^{(i)}, \epsilon^{(i)}) \sim \pi_\mathrm{\theta}(\theta)\pi_\mathrm{\epsilon}(\epsilon)$, and we denote $\Theta = \{\theta^{(i)}\}_{i=1}^M$. Replacing the sampling distributions $\pi_\mathrm{\theta}(\theta)\pi_\mathrm{\epsilon}(\epsilon)$ by $U_{\Omega}$ and $\pi_\mathrm{\theta}(\theta')$ by $U_{\Theta}$, where $U$ denotes the uniform distribution on the given set, we can approximate the sampling distribution over which the expected value is taken in Eq.~\eqref{maxent:eq2} as
\begin{equation}\label{maxent:eq5}
    \begin{aligned}
        &\pi_\mathrm{\theta}(\theta)\pi_\mathrm{\epsilon}(\epsilon) q(\theta'|g(\theta,\epsilon,\lambda),\lambda)
        \\
        =& \frac{\pi_\mathrm{\theta}(\theta)\pi_\mathrm{\epsilon}(\epsilon)\pi_\mathrm{\theta}(\theta')l(g(\theta,\epsilon,\lambda)|\theta',\lambda)}{\int\pi_\mathrm{\theta}(\theta')l(g(\theta,\epsilon,\lambda)|\theta',\lambda) d\theta'}\\
        \approx & \sum_{i=1}^M \frac{\sum_{j=1}^M \delta_{\theta^{(i)}}(\theta)\delta_{\epsilon^{(i)}}(\epsilon)\delta_{\theta^{(j)}}(\theta')l(y^{(i)}(\lambda)|\theta^{(j)},\lambda)}{\sum_{j=1}^M l(y^{(i)}(\lambda)|\theta^{(j)},\lambda)}.
    \end{aligned}
\end{equation}
Given this approximation of sampling distribution, we finally obtain a biased estimator of EIG gradient
\begin{equation}\label{maxent_eq7}
    \begin{aligned}
        &\nabla_\lambda U(\lambda) 
        \\
        \approx&
    \sum_{i=1}^M\frac{\sum_{j=1}^M l(y^{(i)}(\lambda)|\theta^{(j)},\lambda) \nabla_\lambda\log \big[\frac{l(y^{(i)}(\lambda)|\theta^{(i)},\lambda)}{l(y^{(i)}(\lambda)|\theta^{(j)},\lambda)}\big]}{\sum_{j=1}^M l(y^{(i)}(\lambda)|\theta^{(j)},\lambda)}.
    \end{aligned}
\end{equation}
we refer to it as biased estimation of EIG gradient with atomic priors (BEEG-AP). As it requires only one batch of parameter samples for each gradient estimation, the simulation cost amounts to $O(M)$.

\subsection{{Unifying BEEG-AP and NMC}}
To provide a more comprehensive understanding of BEEG-AP, this section reveals its close connection to NMC. Indeed, BEEG-AP can be regarded as an approach that directly computes the gradient of the NMC estimator with sample reuse technique (srNMC) in \cite{huan2013simulation}. We start this section by revisiting the concepts of NMC and srNMC. 

The naïve NMC estimates the EIG as 
\begin{equation}
    U(\lambda) \approx \frac{1}{M}\sum_{i=1}^M \log\frac{l(y^{(i)}(\lambda)|\theta^{(i)}, \lambda)}{\frac{1}{N}\sum_{j=1}^N l(y^{(i)}(\lambda)|\theta^{(i,j)}, \lambda)},
\end{equation}
where $\theta^{(i)}\sim \pi_\mathrm{\theta}(\theta)$, $\epsilon^{(i)} \sim \pi_\mathrm{\epsilon}(\epsilon)$ and $\theta^{(i,j)}\sim \pi_\mathrm{\theta}(\theta)$. This approximation requires a  simulation cost of $O(M\times N)$. To reduce the cost to $O(M)$, Huan \& Marzouk \cite{huan2013simulation} propose reusing the batch of prior samples for the outer Monte Carlo sum in all inner Monte Carlo estimations (i.e., $\theta^{(i,j)} = \theta^{(j)}$ and $N=M$). This yields a more simulation-efficient estimator of EIG
\begin{equation}\label{maxent:eq6}
    \widehat{U}_{srNMC}^{M}(\lambda) = \frac{1}{M}\sum_{i=1}^M \log\frac{l(y^{(i)}(\lambda)|\theta^{(i)}, \lambda)}{\frac{1}{M}\sum_{j=1}^M l(y^{(i)}(\lambda)|\theta^{(j)}, \lambda)}.
\end{equation}
This estimator is also related to the InfoNCE with a tractable conditional \cite{oord2018representation, poole2019variational}, often utilized for the mutual information estimation in representation learning. In addition, similar sample reuse techniques have been applied to portfolio risk measurement problems \cite{zhang2022sample, feng2022sample}. 

Now it is evident that the BEEG-AP can be directly derived from the gradient of Eq.~\eqref{maxent:eq6} w.r.t. $\lambda$. This observation allows us to explore the theoretical behavior of the BED with BEEG-AP by investigating the convergence properties of the srNMC. The original paper of \cite{huan2013simulation} only provides a simple numerical study of the bias of srNMC. Here, we give a more rigorous convergence analysis as the following theorems.
\begin{theorem}\label{maxent:thm2}
    The expectation of $\widehat{U}_{srNMC}^{M}(\lambda)$ satisfies the following: 
    \begin{enumerate}
        \item  $\mathbb{E}[\widehat{U}_{srNMC}^{M}(\lambda)]$ is a lower bound on $U(\lambda)$ for any $M>0$.
        \item  $\mathbb{E}[\widehat{U}_{srNMC}^{M}(\lambda)]$ is monotonically increasing in $M$, i.e.,
        $\mathbb{E}[\widehat{U}_{srNMC}^{M_1}(\lambda)]\leq\mathbb{E}[ \widehat{U}_{srNMC}^{M_2}(\lambda)]$ for $0\leq M_1\leq M_2$. 
    \end{enumerate}
\end{theorem}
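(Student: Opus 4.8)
The plan is to recognize $\widehat{U}_{srNMC}^{M}(\lambda)$ as a variational lower bound on the mutual information—precisely the InfoNCE-type bound mentioned just before the theorem—and to exploit the structure of the estimator as a self-normalized importance-weighting scheme. Writing out the expectation explicitly, I would set $y^{(i)} = g(\theta^{(i)},\epsilon^{(i)},\lambda)$ and note that the $i$-th summand has the form $\log\frac{l(y^{(i)}|\theta^{(i)},\lambda)}{\frac{1}{M}\sum_{j=1}^M l(y^{(i)}|\theta^{(j)},\lambda)}$. The key observation is that in this sum the index $i$ plays a distinguished role (its parameter $\theta^{(i)}$ generated $y^{(i)}$) while the indices $j$ are exchangeable prior draws, so the denominator is an average of the ``true'' likelihood term together with $M-1$ ``contrastive'' terms. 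This is exactly the contrastive/NCE structure, and the lower-bound property is classically established via Jensen's inequality applied to the concave $\log$.

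For part (1), the cleanest route I would take is to show $U(\lambda)-\mathbb{E}[\widehat{U}_{srNMC}^{M}(\lambda)] \geq 0$ by rewriting the gap as an expected Kullback--Leibler divergence or, equivalently, by applying Jensen's inequality to the logarithm of the denominator. Concretely, I would exploit symmetry: since the pairs $(\theta^{(i)},\epsilon^{(i)})$ are i.i.d., $\mathbb{E}[\widehat{U}_{srNMC}^{M}]$ equals the expectation of a single generic summand, and one can write $U(\lambda) = \mathbb{E}[\log\frac{l(y|\theta,\lambda)}{p(y|\lambda)}]$ using Eq.~\eqref{maxent:eq1}. The gap then becomes $\mathbb{E}\big[\log\frac{\frac{1}{M}\sum_j l(y|\theta^{(j)},\lambda)}{p(y|\lambda)}\big]$, and since $\frac{1}{M}\sum_j l(y|\theta^{(j)},\lambda)$ is an unbiased estimator of $p(y|\lambda)$ with respect to the prior draws $\theta^{(j)}$, Jensen's inequality applied to the concave $\log$ delivers that this expectation is $\le 0$, i.e.\ $\mathbb{E}[\widehat{U}_{srNMC}^{M}]\le U(\lambda)$. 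Care is needed here because the $j=i$ term in the denominator is correlated with $y^{(i)}$; I would handle this either by conditioning on $y$ and treating the remaining $\theta^{(j)}$ as fresh prior samples, or by a symmetrization argument over which index is designated the ``signal.''

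For part (2), monotonicity in $M$, the plan is to compare the denominators $D_M = \frac{1}{M}\sum_{j=1}^{M} l(y|\theta^{(j)},\lambda)$ for $M_1 \le M_2$ and show the expected $\log D_M$ decreases (making the overall bound increase). The standard trick is to observe that $D_{M_1}$ has the same distribution as $\mathbb{E}[D_{M_2}\mid \text{a random size-}M_1\text{ subset}]$—that is, $D_{M_1}$ is obtained from $D_{M_2}$ by averaging over randomly chosen subsamples of size $M_1$—so by Jensen's inequality $\mathbb{E}[\log D_{M_1}] \ge \mathbb{E}[\log D_{M_2}]$, whence the bound with $M_2$ dominates. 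I expect the main obstacle to be the bookkeeping around the distinguished index: unlike the textbook InfoNCE proof where signal and contrastive samples are drawn from separate pools, here the sample reuse means $\theta^{(i)}$ appears in its own denominator, so establishing the exact subsampling/exchangeability relation that licenses Jensen requires carefully conditioning on $y^{(i)}$ and verifying that the remaining terms are genuinely i.i.d.\ prior draws. Once that conditional independence is pinned down, both the lower-bound and monotonicity claims reduce to a single application of Jensen's inequality each.
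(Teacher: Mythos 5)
Your instinct that this estimator has the contrastive/InfoNCE structure is correct, and your passing mention of ``rewriting the gap as an expected Kullback--Leibler divergence'' is exactly the paper's route (it follows the PCE analysis of Foster et al.). But the arguments you actually develop fail, in both parts, and for the same underlying reason: you treat the reused sample as a nuisance correlation to be conditioned away, whereas it is the entire mechanism that makes the theorem true. In part (1), your premise is false: conditional on $y^{(i)}$, the denominator $D_M=\frac{1}{M}\sum_{j=1}^M l(y^{(i)}|\theta^{(j)},\lambda)$ is \emph{not} unbiased for $p(y^{(i)}|\lambda)$, because the $j=i$ term has $\theta^{(i)}\sim q(\cdot\,|y^{(i)},\lambda)$ given $y^{(i)}$, so that
\begin{equation*}
\mathbb{E}[D_M\mid y^{(i)}]=\tfrac{1}{M}\,\mathbb{E}_{q(\theta|y^{(i)},\lambda)}[l(y^{(i)}|\theta,\lambda)]+\tfrac{M-1}{M}\,p(y^{(i)}|\lambda)\;>\;p(y^{(i)}|\lambda)
\end{equation*}
in general. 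Worse, even if unbiasedness did hold, Jensen would give $\mathbb{E}[\log(D_M/p(y^{(i)}|\lambda))]\le 0$, i.e.\ $U(\lambda)-\mathbb{E}[\widehat{U}_{srNMC}^{M}(\lambda)]\le 0$, which makes the estimator an \emph{upper} bound --- the opposite of the claim; this is precisely the classical upward bias of na\"ive NMC with independent inner samples. Your sentence ``delivers that this expectation is $\le 0$, i.e.\ $\mathbb{E}[\widehat{U}_{srNMC}^{M}]\le U(\lambda)$'' is a sign error: a nonpositive gap means $\mathbb{E}[\widehat{U}_{srNMC}^{M}]\ge U(\lambda)$. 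It is only the posterior-correlated $j=i$ term, which systematically inflates the denominator, that pulls the estimator back \emph{below} $U(\lambda)$, and quantifying this requires more than plain Jensen.

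Part (2) has the same defect. The IWAE-style subsampling identity applies to exchangeable terms, and there it yields $\mathbb{E}[\log D_{M_2}]=\mathbb{E}[\log\mathbb{E}_S[D_S]]\ge\mathbb{E}_S[\mathbb{E}[\log D_S]]$, i.e.\ the expected log-denominator \emph{increases} with $M$ --- the wrong direction for your purpose (you state the conclusion backwards); moreover, the random size-$M_1$ subsets that omit the distinguished index produce averages that are not distributed like $D_{M_1}$, so the identity does not even apply cleanly. What the paper does instead, in both parts, is to use Bayes' rule $l(y|\theta^{(j)},\lambda)/p(y|\lambda)=q(\theta^{(j)}|y,\lambda)/\pi_{\theta}(\theta^{(j)})$ to rewrite $D_M/p(y|\lambda)$ as a ratio $P(\theta^{(1:M)}|y)/\prod_j\pi_{\theta}(\theta^{(j)})$, where $P$ is the mixture $\frac{1}{M}\sum_j q(\theta^{(j)}|y,\lambda)\prod_{k\neq j}\pi_{\theta}(\theta^{(k)})$; then, since the integrand is permutation-symmetric in the labels, the actual sampling distribution $q(\theta^{(1)}|y,\lambda)\prod_{j\ge 2}\pi_{\theta}(\theta^{(j)})$ can be replaced by the mixture $P$ itself, so the gap becomes $\mathbb{E}_{p(y|\lambda)}[\mathrm{D_{KL}}(P\,\|\,\prod_j\pi_{\theta}(\theta^{(j)}))]\ge 0$, and the increment from $M_1$ to $M_2$ becomes $\mathbb{E}_{p(y|\lambda)}[\mathrm{D_{KL}}(Q\,\|\,P)]\ge 0$ for the analogous mixtures $Q$ and $P$. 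The correlated sample is thus absorbed into the mixture rather than conditioned away; that is the step your proposal is missing.
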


\begin{theorem}\label{maxent:thm3}
    If $l(g(\theta,\epsilon,\lambda)|\theta', \lambda)$ is bounded away from 0 and uniformly bounded from above (i.e., $C_1\leq l(g(\theta,\epsilon,\lambda)|\theta', \lambda)\leq C_2$ a.s. for some positive constants $C_1$ and $C_2$), then
    the mean squared error of $\widehat{U}_{srNMC}^{M}(\lambda)$ converges to 0 at rate $O (1/M)$.
\end{theorem}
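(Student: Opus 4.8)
The plan is to split the mean squared error by the usual bias--variance decomposition, $\mathbb{E}[(\widehat{U}_{srNMC}^{M}-U)^2]=(\mathbb{E}[\widehat{U}_{srNMC}^{M}]-U)^2+\mathrm{Var}(\widehat{U}_{srNMC}^{M})$, and to show that the squared bias is $O(1/M^2)$ and the variance is $O(1/M)$; the two-sided bound $C_1\le l\le C_2$ is the workhorse for both. Write $p_i:=p(y^{(i)}(\lambda)|\lambda)=\mathbb{E}_{\theta'\sim\pi_\theta}[l(y^{(i)}(\lambda)|\theta',\lambda)]$ and $Z_i:=\frac1M\sum_{j=1}^M l(y^{(i)}(\lambda)|\theta^{(j)},\lambda)$, so that $\widehat{U}_{srNMC}^{M}=\frac1M\sum_i[\log l(y^{(i)}(\lambda)|\theta^{(i)},\lambda)-\log Z_i]$ while $U=\mathbb{E}[\log l(y^{(1)}(\lambda)|\theta^{(1)},\lambda)-\log p_1]$. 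The hypothesis forces $Z_i,p_i\in[C_1,C_2]$, hence the ratio $R_i:=Z_i/p_i$ lies in the fixed compact interval $[C_1/C_2,C_2/C_1]\ni 1$ on which $\log$ is smooth with uniformly bounded derivatives.

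For the bias, by exchangeability every summand has the same law, so $U-\mathbb{E}[\widehat{U}_{srNMC}^{M}]=\mathbb{E}[\log R_1]$. First I would condition on $y^{(1)}=g(\theta^{(1)},\epsilon^{(1)},\lambda)$: the $M-1$ terms $l(y^{(1)}(\lambda)|\theta^{(j)},\lambda)$ with $j\neq1$ are then i.i.d.\ with conditional mean $p_1$ and conditional variance $v(y^{(1)})\le (C_2-C_1)^2/4$, while the single self-term $j=1$ is frozen. This gives $\mathbb{E}[Z_1-p_1\mid y^{(1)}]=\frac1M\big(l(y^{(1)}(\lambda)|\theta^{(1)},\lambda)-p_1\big)=O(1/M)$ and $\mathrm{Var}(Z_1\mid y^{(1)})=\frac{M-1}{M^2}v(y^{(1)})=O(1/M)$, both uniformly in $y^{(1)}$. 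A second-order Taylor expansion $\log R_1=(R_1-1)-\tfrac12(R_1-1)^2+r_1$ with $|r_1|\le K|R_1-1|^3$ then yields $\mathbb{E}[\log R_1]=O(1/M)$, since $\mathbb{E}[R_1-1]=O(1/M)$, $\mathbb{E}[(R_1-1)^2]\le C_1^{-2}\mathbb{E}[(Z_1-p_1)^2]=O(1/M)$, and $\mathbb{E}|R_1-1|^3\le C\,\mathbb{E}[(R_1-1)^2]=O(1/M)$ (the cube is dominated by the square on the bounded range). Hence the bias is $O(1/M)$ and its square is $O(1/M^2)$.

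The variance is the main obstacle, because the summands $A_i:=\log l(y^{(i)}(\lambda)|\theta^{(i)},\lambda)-\log Z_i$ are dependent: every $Z_i$ reuses the shared pool $\{\theta^{(j)}\}_{j=1}^M$. Rather than expanding the $O(M^2)$ covariances directly, I would invoke the Efron--Stein inequality on the i.i.d.\ coordinates $\{(\theta^{(k)},\epsilon^{(k)})\}_{k=1}^M$, reducing the task to bounding the effect on $\widehat{U}_{srNMC}^{M}$ of replacing one pair $(\theta^{(k)},\epsilon^{(k)})$ by an independent copy. Resampling coordinate $k$ changes $A_k$ by $O(1)$ (both $\log l$ and $\log Z_k$ stay within $[\log C_1,\log C_2]$), contributing $O(1/M)$ after the $1/M$ prefactor; and for each $i\neq k$ it changes only the single $j=k$ term of $Z_i$ by at most $C_2-C_1$, so $Z_i$ moves by at most $(C_2-C_1)/M$ and, using $|\log a-\log b|\le|a-b|/C_1$, $A_i$ moves by $O(1/M)$, contributing $\frac1M\sum_{i\neq k}O(1/M)=O(1/M)$ in total. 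Thus resampling any single coordinate perturbs the estimator by $O(1/M)$, and Efron--Stein gives $\mathrm{Var}(\widehat{U}_{srNMC}^{M})\le\frac12\sum_{k=1}^M O(1/M^2)=O(1/M)$.

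Combining the two bounds gives $\mathbb{E}[(\widehat{U}_{srNMC}^{M}-U)^2]=O(1/M^2)+O(1/M)=O(1/M)$, as claimed. The delicate point throughout is that \emph{every} estimate---the Taylor remainder in the bias and the per-coordinate sensitivity in the variance---must be controlled \emph{uniformly}, which is exactly what the assumption $C_1\le l\le C_2$ buys: it confines $Z_i$ and $p_i$ to a compact interval bounded away from $0$ where $\log$ and its moments behave, and it caps the one-term sensitivity of each inner average.
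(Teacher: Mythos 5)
Your proof is correct, but it takes a genuinely different route from the paper's. The paper does not use a bias--variance decomposition at all: it applies Minkowski's inequality in $L^2$ to split the error into an outer Monte Carlo term $\big\lVert U(\lambda)-\frac{1}{M}\sum_i f(\theta^{(i)},\epsilon^{(i)})\big\rVert_2$ (where $f=\log\frac{l}{p}$ is the exact integrand) plus an inner-approximation term $\big\lVert \frac{1}{M}\sum_i \log (Z_i/p_i)\big\rVert_2$, and shows each is $O(1/\sqrt{M})$ in $L^2$ --- the latter via the Lipschitz bound $|\log a-\log b|\le |a-b|/C_1$, splitting off the self-term $j=i$ (bounded by $2C_2/M$), and a conditional-variance computation for the remaining i.i.d.\ average --- concluding $\mathrm{MSE}\le 2(U^2+V^2)=O(1/M)$. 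You instead decompose the MSE into squared bias plus variance, control the bias by exchangeability and a second-order Taylor expansion of $\log$ around $1$, and control the variance by Efron--Stein with a uniform per-coordinate sensitivity of $O(1/M)$. Your Efron--Stein step is an elegant way to confront the dependence created by the shared pool $\{\theta^{(j)}\}$, which the paper's $L^2$ triangle-inequality split sidesteps entirely; conversely, the paper's argument is more elementary (no Taylor remainder, no concentration-type lemma) and mirrors the nested structure of the estimator. Your route also buys something extra: it shows the bias is $O(1/M)$ (squared bias $O(1/M^2)$), so the $O(1/M)$ MSE is variance-dominated --- a refinement the paper's proof does not provide. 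Both proofs use the hypothesis in the same two ways: $C_1$ keeps $\log$ Lipschitz and sensitivities bounded, $C_2$ caps the fluctuation of each likelihood term.

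One small slip to fix: in the bias step you condition on $y^{(1)}$ and declare the self-term $l(y^{(1)}(\lambda)|\theta^{(1)},\lambda)$ ``frozen''; conditioning on $y^{(1)}$ alone leaves $\theta^{(1)}$ random (it then follows the posterior), so the self-term is not fixed. You should condition on the pair $(\theta^{(1)},\epsilon^{(1)})$, which determines $y^{(1)}$ and does freeze the self-term, while the terms $j\neq 1$ remain conditionally i.i.d.\ with mean $p_1$. Since all your bounds are uniform in the conditioning variable, the argument goes through verbatim after this correction.
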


Theorem~\ref{maxent:thm2} indicates that the srNMC provides a lower bound estimation for the EIG and the gap can be increasingly narrowed as $M$ increases. This suggests that BED with BEEG-AP aims to use stochastic gradients to maximize a lower bound of the ground-truth EIG. Theorem~\ref{maxent:thm3} further establishes the consistency of srNMC and obtains a linear convergence rate of $O(1/M)$ under certain assumptions, indicating that the optimization objective of BED with BEEG-AP can be sufficiently close to true EIG as we increase the number of samples $M$. However, the following Theorem~\ref{maxent:thm4} suggests that achieving a neglected error is challenging in practice when the ground-truth EIG is large, even when these assumptions are admitted. 
\begin{theorem}\label{maxent:thm4}
    For any $C$ satisfying $0\leq C\leq U(\lambda)/2$, if $M\leq \exp(U(\lambda)/2)$, we have 
    \begin{equation}
        U(\lambda) - \widehat{U}_{srNMC}^{M}(\lambda)>C.
    \end{equation}
\end{theorem}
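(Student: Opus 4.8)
The plan is to exploit the structural ceiling that the srNMC estimator (equivalently InfoNCE) saturates at $\log M$, and then to combine this ceiling with the two hypotheses $M \le \exp(U(\lambda)/2)$ and $C \le U(\lambda)/2$ by elementary arithmetic. The whole argument reduces to one genuinely useful inequality plus bookkeeping, so I would invest the effort in establishing that inequality cleanly.

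First I would rewrite a single summand of $\widehat{U}_{srNMC}^{M}(\lambda)$ by factoring the $1/M$ out of the denominator's normalizing average:
\[
\log\frac{l(y^{(i)}(\lambda)|\theta^{(i)},\lambda)}{\frac{1}{M}\sum_{j=1}^M l(y^{(i)}(\lambda)|\theta^{(j)},\lambda)} = \log M + \log\frac{l(y^{(i)}(\lambda)|\theta^{(i)},\lambda)}{\sum_{j=1}^M l(y^{(i)}(\lambda)|\theta^{(j)},\lambda)}.
\]
The key observation is that the self-likelihood $l(y^{(i)}(\lambda)|\theta^{(i)},\lambda)$ is itself one of the $M$ nonnegative terms composing the denominator sum, so the remaining ratio lies in $(0,1]$ and its logarithm is at most $0$. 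Hence every summand is at most $\log M$, and averaging over $i$ yields the almost-sure ceiling
\[
\widehat{U}_{srNMC}^{M}(\lambda) \le \log M,
\]
which is precisely the $\log(\text{batch size})$ saturation of InfoNCE already noted in the text preceding Eq.~\eqref{maxent:eq6}.

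With the ceiling in hand I would invoke the hypotheses directly. Taking logarithms of $M \le \exp(U(\lambda)/2)$ gives $\log M \le U(\lambda)/2$, hence $\widehat{U}_{srNMC}^{M}(\lambda) \le U(\lambda)/2$, and therefore
\[
U(\lambda) - \widehat{U}_{srNMC}^{M}(\lambda) \ge U(\lambda) - \log M \ge U(\lambda) - \frac{U(\lambda)}{2} = \frac{U(\lambda)}{2} \ge C,
\]
where the last step uses $C \le U(\lambda)/2$. This already delivers the non-strict version of the claim.

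The only delicate point — and the step I expect to need care — is upgrading the final $\ge C$ to the strict $> C$ asserted in the statement, which requires locating slack somewhere in the chain. The summand bound above is an equality only in the degenerate event that every off-diagonal likelihood $l(y^{(i)}(\lambda)|\theta^{(j)},\lambda)$ with $j\ne i$ vanishes, so under any mild non-degeneracy one has the strict ceiling $\widehat{U}_{srNMC}^{M}(\lambda) < \log M$ (almost surely, and strictly in expectation, $\mathbb{E}[\widehat{U}_{srNMC}^{M}(\lambda)] < \log M$ whenever those off-diagonal likelihoods are positive with positive probability), and this strictness propagates through the inequality chain; moreover, whenever $C < U(\lambda)/2$ or $M < \exp(U(\lambda)/2)$ the strictness is automatic from the arithmetic alone. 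I would therefore state the governing non-degeneracy condition explicitly rather than assert strictness, so that the $> C$ conclusion is fully justified in all the boundary cases $C = U(\lambda)/2$ and $M = \exp(U(\lambda)/2)$.
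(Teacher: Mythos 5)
Your proof is correct and follows essentially the same route as the paper's: the almost-sure ceiling $\widehat{U}_{srNMC}^{M}(\lambda)\leq \log M$ obtained by observing that the self-likelihood $l(y^{(i)}|\theta^{(i)},\lambda)$ is one of the $M$ nonnegative terms in the denominator average, followed by the same chain $U(\lambda)-\widehat{U}_{srNMC}^{M}(\lambda)\geq U(\lambda)-\log M\geq U(\lambda)/2\geq C$. Your worry about upgrading $\geq C$ to the asserted $>C$ is well-founded and in fact exposes a gap in the paper itself: the paper's own proof terminates with exactly this non-strict chain, so your explicit identification of the non-degeneracy needed at the boundary cases $C=U(\lambda)/2$ and $M=\exp(U(\lambda)/2)$ is more careful than the published argument.
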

Indeed, this theorem tells us that the simulation cost required grows exponentially with the ground-truth EIG to achieve a reasonable error bound.

\section{Experiments}\label{maxentsec:exps}
A large variety of BED methods could exhibit poor performance due to the presence of large EIG values in the experiments to be designed. Before diving into numerical demonstrations (see \texttt{https://github.com/ziq-ao/GradEIG} for the research code and  Appendix for further details of our experiments), we list a few bench-marking approaches and briefly discuss the above limitation they have in common. 

 \textbf{PCE.} Prior contrastive estimation (PCE) \cite{foster2020unified} estimates the EIG as 
    \begin{equation}
    \widehat{U}_{PCE}^{M,N}(\lambda) = \frac{1}{M}\sum_{i=1}^M \log\frac{l(y^{(i)}(\lambda)|\theta^{(i)}, \lambda)}{\frac{1}{N+1}\sum_{j=0}^N l(y^{(i)}(\lambda)|\theta^{(i,j)}, \lambda)},
    \end{equation}
    where $\theta^{(i)}, \epsilon^{(i)}\sim \pi_{\theta}(\theta)\pi_{\epsilon}(\epsilon)$, $\theta^{(i,0)}=\theta^{(i)}$ and $\theta^{(i,j)}\sim \pi_{\theta}(\theta)$ for $j=1,...,N$. In contrast to srNMC, PCE only reuses one outer Monte Carlo sample in each inner Monte Carlo estimation, resulting in a simulation cost of $O(M\times N)$. It is easy to check that $\widehat{U}_{PCE}^{M,N}(\lambda)$ does not exceed $\log N$, so PCE shares a similar result to the one stated in Theorem~\ref{maxent:thm4}, implying its simulation inefficiency for estimating large EIG values.
    
    \textbf{ACE.} To improve the inner Monte Carlo in the denominator, adaptive contrastive estimation (ACE) \cite{foster2020unified} introduces a posterior inference network $q_{\phi}$ parameterized by $\phi$ and use it as the proposal distribution for sampling, i.e.,
    \begin{equation}
    \begin{aligned}
        &\widehat{U}_{ACE}^{M,N}(\lambda) 
        \\
        &\quad = \frac{1}{M}\sum_{i=1}^M \log\frac{l(y^{(i)}(\lambda)|\theta^{(i)}, \lambda)}{\frac{1}{N+1}\sum_{j=0}^N \frac{\pi_{\theta}(\theta^{i,j}) l(y^{(i)}(\lambda)|\theta^{(i,j)}, \lambda)}{q_{\phi}(\theta^{(i,j)}|y^{(i)}(\lambda))}},
    \end{aligned}
    \end{equation}
    where $\theta^{(i)}, \epsilon^{(i)}\sim \pi_{\theta}(\theta)\pi_{\epsilon}(\epsilon)$, $\theta^{(i,0)}=\theta^{(i)}$ and $\theta^{(i,j)}\sim q_{\phi}(\theta^{(i,j)}|y^{(i)}(\lambda))$ for $j=1,...,N$. Given this adaptive estimator, the network parameter $\phi$ and design variable $\lambda$ are then optimized jointly. However, learning the posterior inference network can be challenging when there are strong dependencies between the conditional (the observations) and target variables (the parameters). This occurs when the ground-truth EIG values are large (i.e., there is a high mutual information between parameters and observations). In practice, we observe that general-purpose conditional density networks (such as Mixture Density Network \cite{bishop1994mixture} and Normalizing Flows \cite{papamakarios2021normalizing}) usually fail or run indefinitely for invalid values during training in this case.
    
    \textbf{GradBED.} Gradient-based Bayesian Experimental Design (GradBED) \cite{kleinegesse2021gradient} designs experiments by optimizing a variational lower bound of mutual information between parameters and observations. A variety of candidates of lower bounds can be found in \cite{kleinegesse2021gradient}. In this paper, we only consider the following NWJ estimator \cite{nguyen2010estimating}:
    \begin{equation}
        \begin{aligned}
             &\widehat{U}_{NWJ}^{M}(\lambda) 
             \\
             =& \frac{1}{M}\sum_{i=1}^M \big[T_{\psi}(\theta^{(i)}, y^{(i)}(\lambda))\! -\! \frac{1}{e}\exp{(T_{\psi}(\theta^{(i)},  y'^{(i)}(\lambda))}\big],
        \end{aligned}
    \end{equation}
    where $\theta^{(i)}, \epsilon^{(i)}\sim \pi_{\theta}(\theta)\pi_{\epsilon}(\epsilon)$, $\theta'^{(i)}, \epsilon'^{(i)}\sim \pi_{\theta}(\theta)\pi_{\epsilon}(\epsilon)$, $y'^{(i)}(\lambda)) = g(\theta'^{(i)},\epsilon'^{(i)},\lambda)))$ and $T_{\psi}$ is a neural network parameterized by $\psi$. The simulation cost is $O(M)$ for GradBED. During optimization, the network parameter $\psi$ and design variable $\lambda$ are updated simultaneously. As studied in \cite{song2019understanding}, the variance of certain variational mutual information estimators, including NWJ, could grow exponentially with the ground-truth mutual information (or EIG in Bayesian experimental design literature) and thereby lead to poor designs.

\subsection{EIG Gradient Estimation Accuracy}
We start by examining the empirical convergence properties of the proposed estimators. We consider a Bayesian linear regression model with tractable EIG gradients. 
Assume $n\times 1$ observations are generated by the following linear acquisition system
\begin{equation}
    \mathrm{y} = \mathrm{D}\theta + \epsilon
\end{equation}
where 
$\mathrm{D} = [1,\lambda', (\lambda')^2]$ 
is the design matrix obtained by the design vector $\lambda = (\lambda_1,...,\lambda_n)$, $\theta=(\theta_1, \theta_2, \theta_3)'$ are the parameters of interest and $\epsilon$ are $n\times 1$ i.i.d. noises. In Bayesian framework, we assign a Gaussian prior $\theta\sim \mathcal{N}(0,I_3)$ on the unknown parameters and a Gaussian observation noise with variance $\sigma^2$, that is, $p(\mathrm{y}|\theta) = \mathcal{N}(\mathrm{D}\theta, \sigma^2I_n)$.
The above modeling admits a closed-form representation of EIG using the entropy expressions of multivariate normal distributions\cite{ahmed1989entropy}, that is,
\begin{equation}
    U(\lambda)=\frac{1}{2}(\log\frac{|\mathrm{D}\mathrm{D}'+\sigma^2I_n|}{|\sigma^2I_n|}).
\end{equation}
The EIG gradient can then be analytically derived or directly computed by automatic differentiation frameworks. 

In this study, we estimate and compare the biases of three approaches (BEEG-AP, UEEG-MCMC and PCE) with a large number of repeated trials. The scatter plots in Fig.~\ref{fig:gradient_accuracy} shows the comparisons of these estimated biases across 20 independent designs.
From the top of Fig.~\ref{fig:gradient_accuracy}, it is evident that the UEEG-MCMC has lower bias with the ground-truth EIG increases, outperforming the other two methods. On the other hand, the bottom of Fig.~\ref{fig:gradient_accuracy} exhibits that, while BEEG-AP requires fewer simulation costs, it yields a comparable level of bias to PCE. 

\begin{figure}
    \centering
    \includegraphics[width=0.15\textwidth]{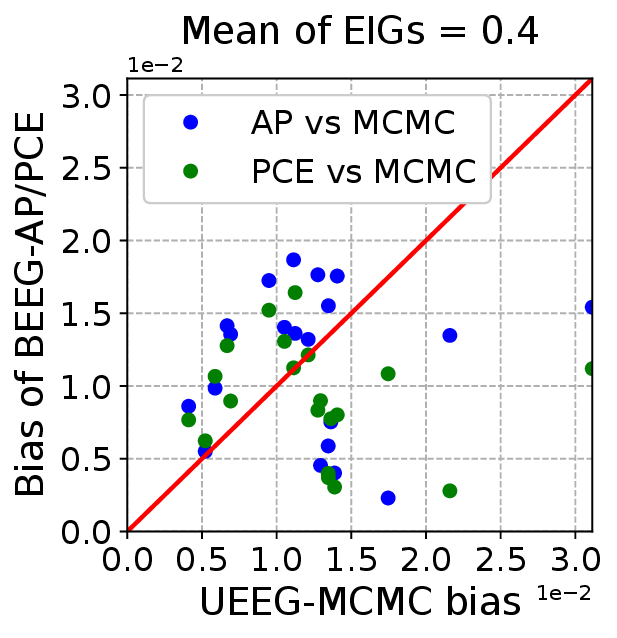}
    \includegraphics[width=0.15\textwidth]{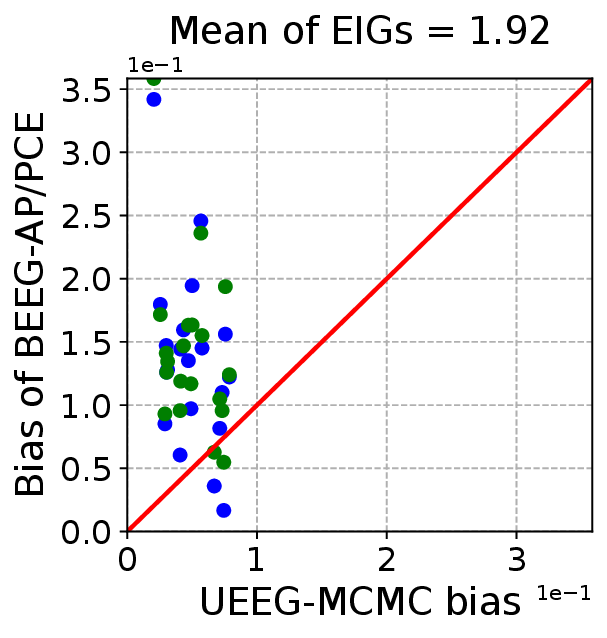}
    \includegraphics[width=0.15\textwidth]{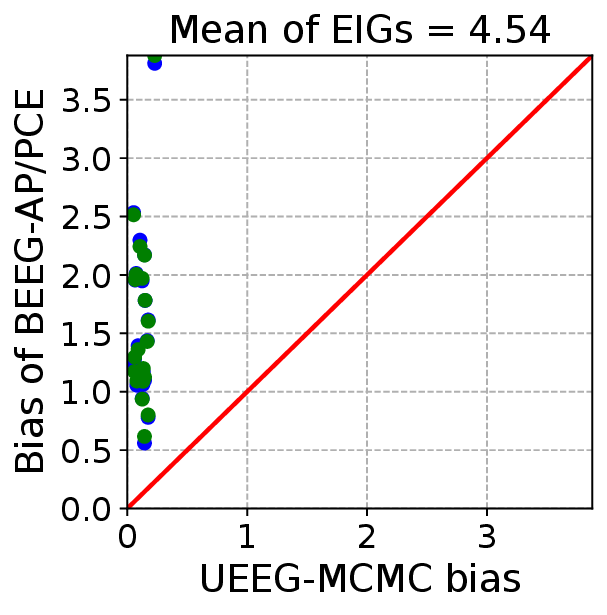}
    \includegraphics[width=0.15\textwidth]{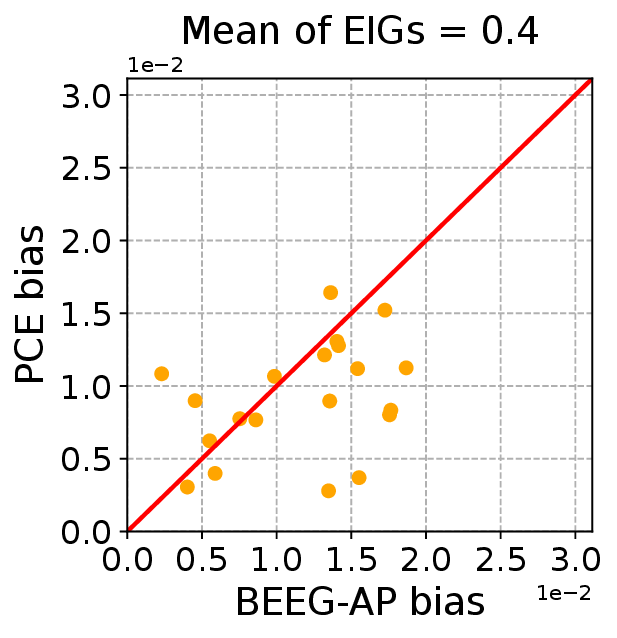}
    \includegraphics[width=0.15\textwidth]{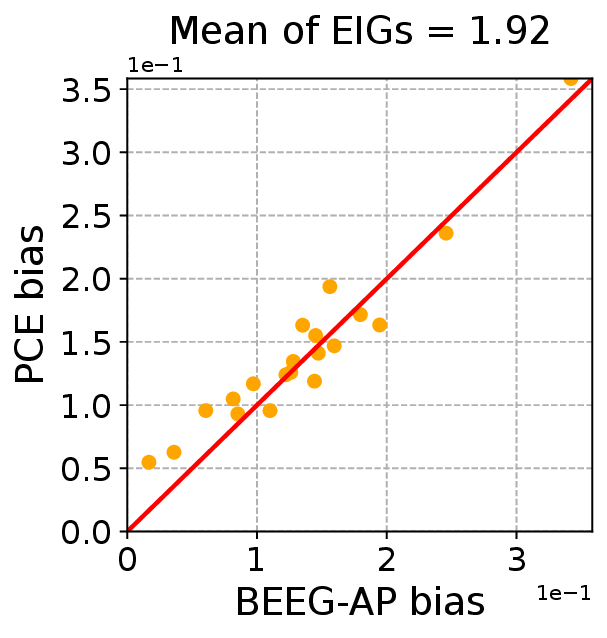}
    \includegraphics[width=0.15\textwidth]{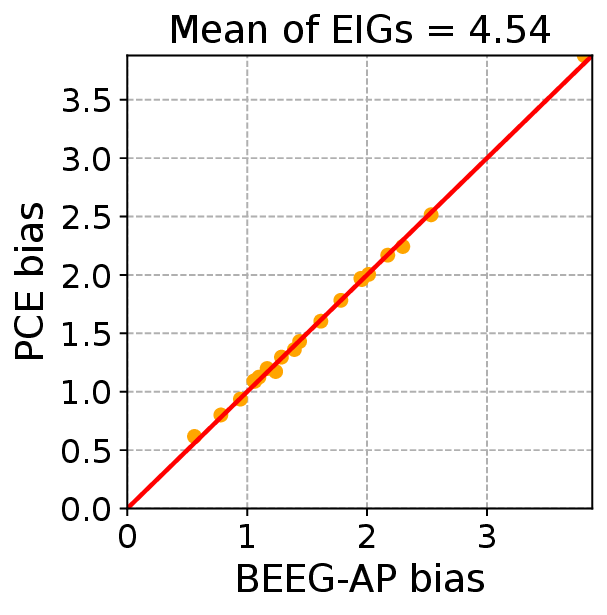}
    \caption{Top: the estimated biases of BEEG-AP and PCE versus those of UEEG-MCMC for 20 independent designs. Bottom: the estimated biases of PCE versus those of BEEG-AP for 20 independent designs.}
    \label{fig:gradient_accuracy}
\end{figure}

\subsection{A Toy Algebraic Model}
In this experiment, we consider a toy problem with a single optimal design. The model is given by the following nonlinear map:
\begin{equation}
    \bigg[\begin{matrix}
        y_1 \\y_2
    \end{matrix}\bigg]
    \!\!=\!\!
    \bigg[
    \begin{matrix}
        0.5\theta^3 d_1 + \theta \exp(-|0.2-0.5d_1|)+d_1^2\\
        0.5\theta^3 (d_2\!+\!1.6)\! +\! \theta \exp(-|0.6\!+\!\!0.5d_2|)\!+\!d_2^2
    \end{matrix}
    \bigg]
    \!+\!
    \bigg[
    \begin{matrix}
        \epsilon_1\\
        \epsilon_2
    \end{matrix}
    \bigg],
\end{equation}
where $\theta$ is the model parameter, $d_i$ are design variables and $\epsilon_i$ are independent observation noises. We assign a uniform prior $\theta \sim \mathrm{Unif}(0,1)$ on the parameter and restrict the design variables in the interval of $[0, 1]$. We conduct two experiments with different noise terms, i.e., a large noise scenario with $\epsilon_i \sim \mathcal{N}(0, (0.1)^2)$ and a small noise scenario with $\epsilon_i \sim \mathcal{N}(0, (0.0001)^2)$. By setting different noise terms, we can create scenarios to represent the small and large EIG cases and observe how the different methods perform under these conditions.

We applied five methods to this problem: BEEG-AP, UEEG-MCMC, ACE, PCE and GradBED. To mitigate the impact of randomness, we perform 20 independent runs for each method. The results for both the large and small noise settings are depicted in Fig.~\ref{fig:toy}. From the figures we can see the final designs obtained by BEEG-AP and UEEG-MCMC are more concentrated compared to the designs generated by the other three methods. In particular, in the small noise case, UEEG-MCMC stands out as the only method where all designs eventually concentrate on a single point. Then we use different metrics to judge the quality of the designs obtained for the two settings. In the large noise case, we apply NMC with large samples to obtain high-quality estimations of the EIGs. Fig.~\ref{fig:toy_val_large} shows the estimated EIGs throughout the entire design space. Remarkably, we observe that the only optimal design identified by the estimations aligns with the the results obtained by BEEG-AP and UEEG-MCMC. In the small noise case, utilizing NMC for reliable EIG estimations becomes impractical due to the large simulation budget required. We therefore resort to using the posterior entropy as the metric to evaluate the quality of the designs, and the results are plotted in Fig.~\ref{fig:toy_val_small}. From the figure, it is evident that BEEG-AP and UEEG-MCMC produce designs with
smaller posterior entropy. Specifically, UEEG-MCMC yields even smaller posterior entropy than BEEG-AP, which demonstrates the superior performance  of UEEG-MCMC in large EIG case.

\begin{figure}
    \centering
    \includegraphics[width=0.23\textwidth]{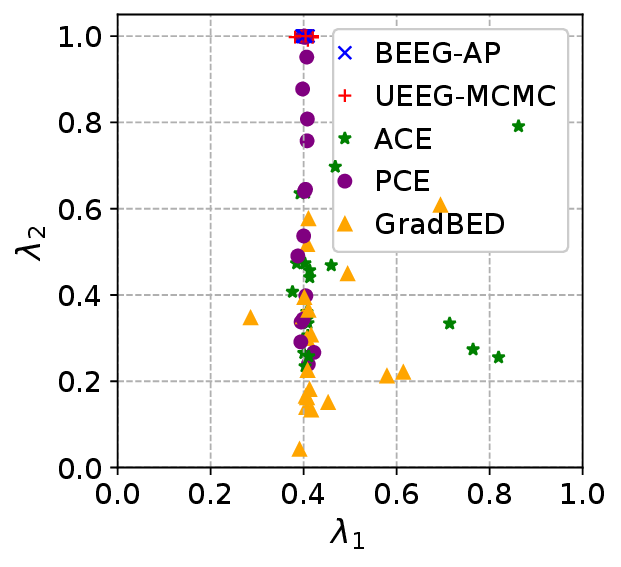}
    \includegraphics[width=0.23\textwidth]{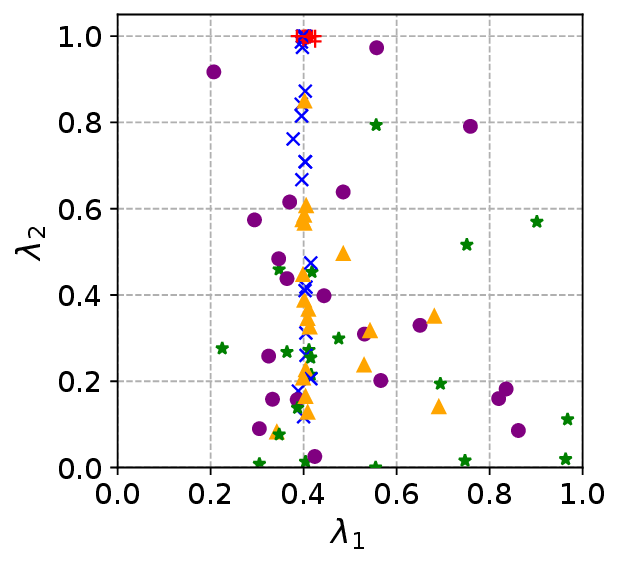}
    \caption{The final designs of 20 independent trials for large noise setting (left) and small noise setting (right).}
    \label{fig:toy}
\end{figure}

\begin{figure}
    \centering
    \begin{minipage}{0.2\textwidth}
        \centering
        \includegraphics[width=\textwidth]{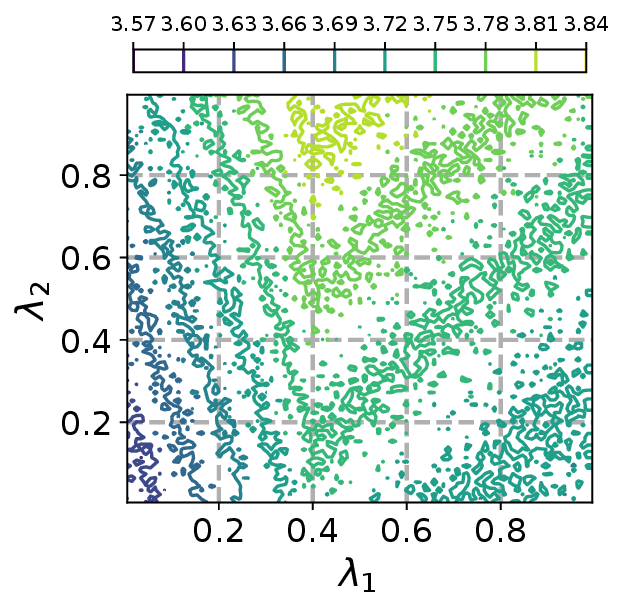}
        \caption{Estimates of EIG for large noise setting. }
        \label{fig:toy_val_large}
    \end{minipage}
    \hspace{0.02\textwidth}
    \begin{minipage}{0.24\textwidth}
        \centering
        \includegraphics[width=\textwidth]{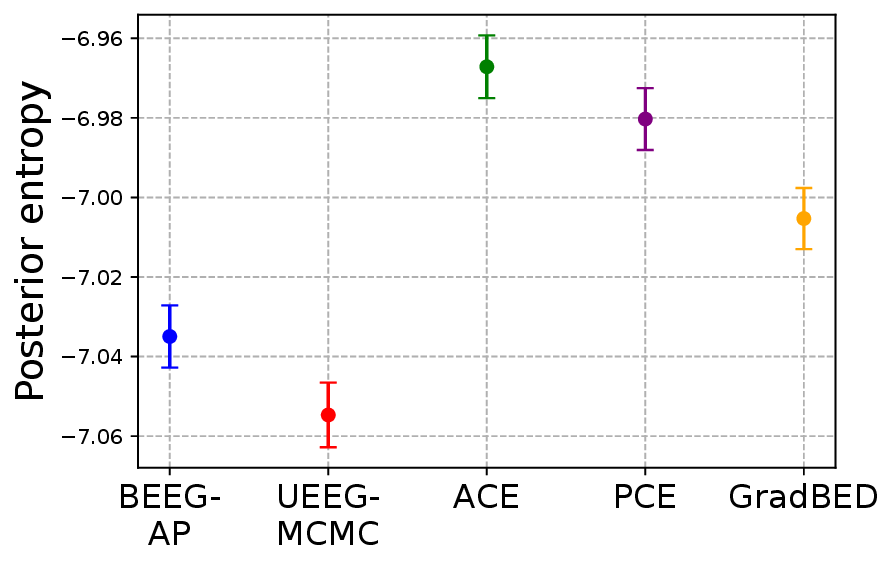}
        \caption{The posterior entropy for small noise setting. Shown are the means of entropy with their standard error bars.}
        \label{fig:toy_val_small}
    \end{minipage}
\end{figure}



\subsection{Pharmacokinetic (PK) Model}
Next, we consider an experimental design problem for PK studies. These studies aim to comprehend the underlying kinetics of a drug, shedding light on how it is absorbed, distributed, metabolized, and eliminated within the body over time. To achieve this understanding, it is a common practice to collect blood samples from the study subjects. However, the process of blood sampling involves various practical constraints, including financial limitations, participant burden, and ethical considerations. Therefore, researchers must strategically design the sampling strategy to gather meaningful information while minimizing the number of blood samples required.
Here, we focus on the PK model introduced by \cite{ryan2014towards}. The model under consideration consists of three parameters of interest ${\theta}=(k_a, k_e, V)$: the absorption rate constant $k_a$, the elimination rate constant $k_e$, and the volume of distribution V which represents the theoretical volume that the drug would need to occupy to achieve the current concentration in the blood plasma. The drug concentration of blood sample taken at time $t$, denoted as $y_t$, follows
\begin{equation}
    y_t = \frac{D}{V} \cdot \frac{k_a}{k_a - k_e} \cdot (e^{-k_e t} - e^{-k_a t}) \cdot (1 + \epsilon_{1t}) + \epsilon_{2t},
\end{equation}
where $D = 400$ is the fixed dose administered at the beginning of the experiment, $\epsilon_{1t}$ and $\epsilon_{2t}$ are the multiplicative and additive Gaussian
noises respectively. As in \cite{ryan2014towards}, we assign a log-normal prior on $\theta$
\begin{equation}
\log \theta \sim N\left[\left(\begin{array}{c}
\log (1) \\
\log (0.1) \\
\log (20)
\end{array}\right),\left(\begin{array}{ccc}
0.05 & 0 & 0 \\
0 & 0.05 & 0 \\
0 & 0 & 0.05
\end{array}\right)\right].
\end{equation}
The design variables are assumed to be the 10 blood sampling times $(d_1,..., d_{10})$, $d_i\geq 0$ for $i=1,...,10$, and the corresponding drug concentrations at these times $(y_{d_1}, ..., y_{d_{10}})$ form the observations. 

Again, we create a small EIG setting with $\epsilon_{1t}\sim \mathcal{N}(0, 0.01)$ and $\epsilon_{2t}\sim \mathcal{N}(0, 0.1)$, and a large EIG setting with $\epsilon_{1t}=0$ and $\epsilon_{2t}\sim \mathcal{N}(0, 0.001)$. In small EIG scenario we use large samples to compute high-quality NMC estimations to validate various methods, while in large EIG scenario we compare the posterior entropies obtained by them.
Fig.~\ref{fig:pk_gt} shows BEEG-AP outperforms other methods in terms of convergence rate in small EIG scenario. This can also be supported by examining the convergence histories of all methods, as shown in 
Fig.~B.1
in the Appendix. However, in large EIG scenario, UEEG-MCMC achieve the best performance as the validation experiments indicate in 
Fig.~\ref{fig:pk_val} 
and the convergence histories show in
Fig.~B.2
in the Appendix. It should be mentioned that in this case ACE fails to learn a posterior inference network due to the strong dependencies between the observations and the parameters.

\begin{figure}
    \centering
    \begin{minipage}{0.22\textwidth}
        \centering
        \includegraphics[width=\textwidth]{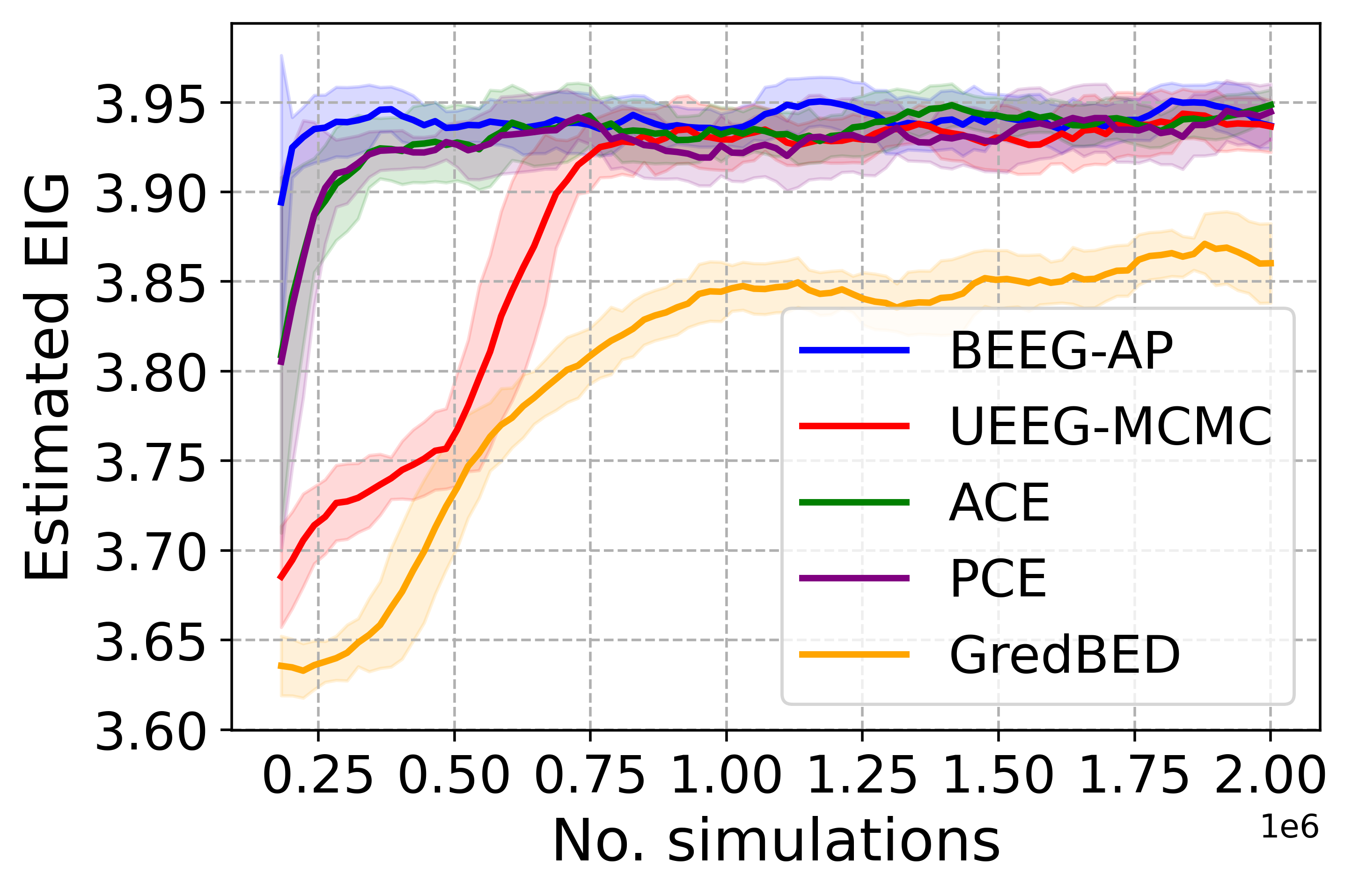}
        \vspace{-0.5cm}
        \caption{Optimization of EIG for PK model with multiplicative noise $\mathcal{N}(0, 0.01)$ and additive noise $\mathcal{N}(0, 0.1)$ as a function of number of simulations. Shown are the moving averages with the standard error bars.}
        \label{fig:pk_gt}
    \end{minipage}
    \hspace{0.02\textwidth}
    \begin{minipage}{0.22\textwidth}
        \centering
        \includegraphics[width=\textwidth]{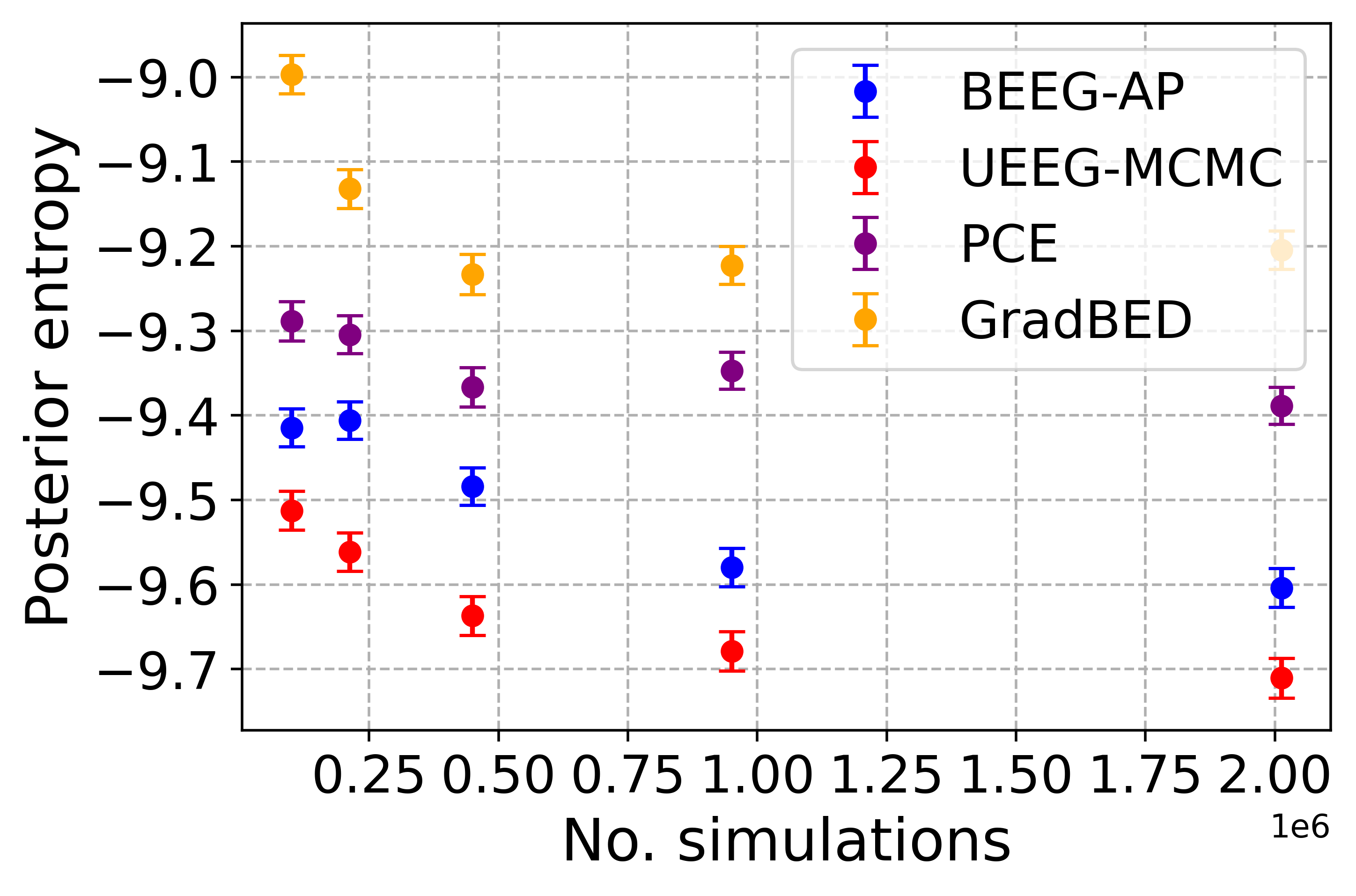}
        \vspace{-0.5cm}
        \caption{The posterior entropy for PK model with additive noise $\mathcal{N}(0, 0.001)$ as a function of number of simulations. Shown are the means of entropy with their standard error bars.\\~}
    \label{fig:pk_val}
    \end{minipage}
\end{figure}



\subsection{Signal Transducer and Activator of Transcription 5 (STAT5) model}
Finally we aim to design the measurement times for a dynamic system modeled by ordinary differential equations (ODEs). 
We take the mathematical model of the core module of the Janus family of kinases (JAK)–signal
transducer and activator of transcription (STAT) pathway in \cite{swameye2003identification} as a case study. The core module of the JAK-STAT pathway is represented by the latent transcription factor STAT5, and the dynamics of STAT5 populations $x_1$, $x_2$, $x_3$ and $x_4$ can be described by four coupled ODEs (see Appendix for full details of the ODEs). The rate constants $k_1$, $k_2$ and the delay parameter $\tau$ are the three model parameters to be inferred from measured data.

It is experimentally challenging to directly measure distinct STAT5 populations separately. Instead, one can measure the amount of
tyrosine phosphorylated STAT5 $y_1=s_1(x_2+x_3)$ and the total amount of STAT5  $y_2=s_2(x_1+x_2+x_3)$. We assume the scaling parameters to be $s_1 = 0.33$ and $s_2 = 0.26$.
We assign a uniform prior on $\theta = (k_1, k_2, \tau)$ with lower range $[0.5,0.05,4.0]$ and upper range $[3.0, 0.2, 10.0]$. The objective of the experimental design is to allocate 16 measurement times for STAT5 populations over a time span from 0 to 60 minutes, which yields 32 experimental measurements in total.

In this application, we set two levels of additive Gaussian observation noises, $\mathcal{N}(0, 10^{-4})$ and $\mathcal{N}(0, 10^{-6})$, representing the small and large EIG scenarios respectively. In both cases, we assess the quality of designs by the posterior entropies obtained. The results for the small EIG scenario depicted in Fig.~\ref{fig:stats5_val_0} indicate that, both BEEG-AP and UEEG-MCMC outperform other methods and BEEG-AP appears the best. However, UEEG-MCMC demonstrates the best performance in the large EIG scenario, while BEEG-AP's performance falls below that of GradBED, as shown in Fig.~\ref{fig:stats5_val_1}. ACE fails in both scenario. The convergence histories of all approaches can be found in 
Fig.~B.3
and 
Fig.~B.4
in the Appendix.

\begin{figure}
    \centering
    \begin{minipage}{0.22\textwidth}
        \centering
        \includegraphics[width=\textwidth]{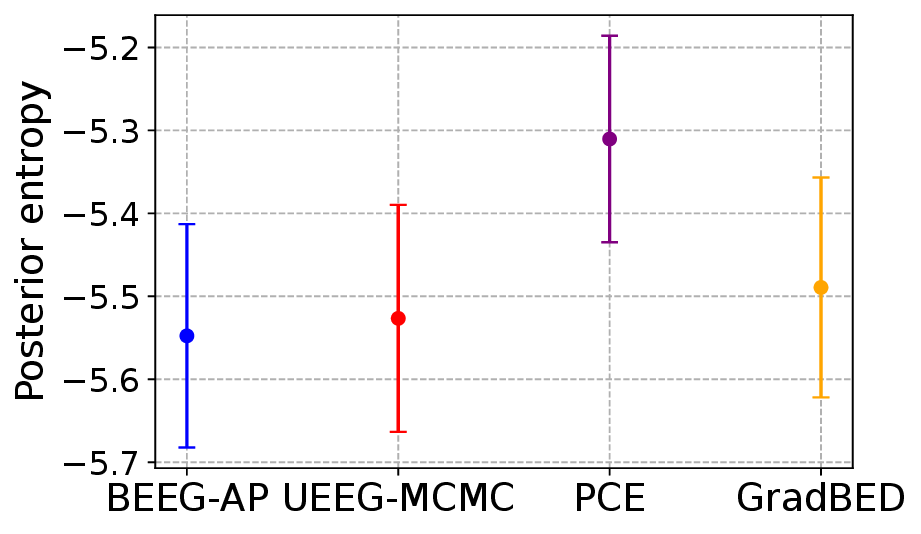}
        \caption{The posterior entropy for STAT5 model with additive noise $\mathcal{N}(0, 10^{-4})$ for designs obtained. Shown are the means of entropy with their standard error bars.}
        \label{fig:stats5_val_0}
    \end{minipage}
    \hspace{0.02\textwidth}
    \begin{minipage}{0.22\textwidth}
        \centering
        \includegraphics[width=\textwidth]{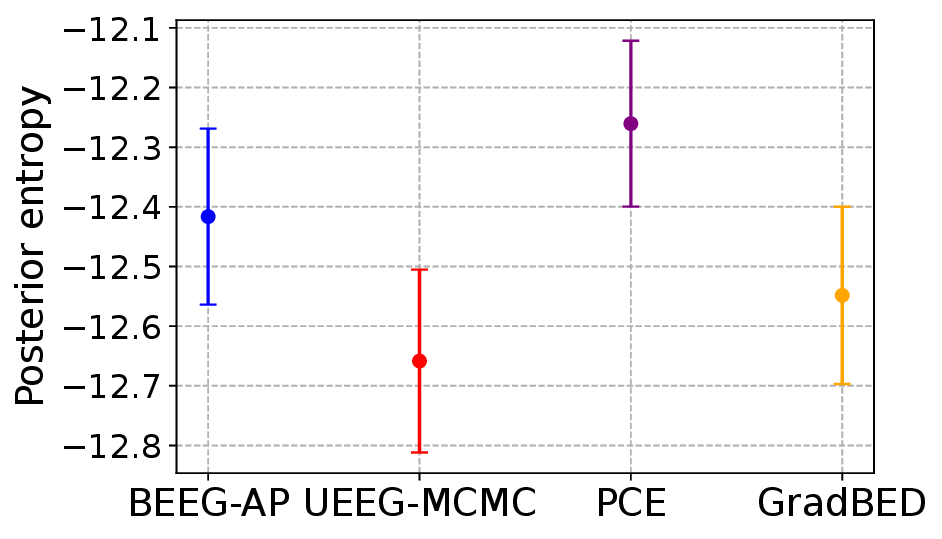}
        \caption{The posterior entropy for STAT5 model with additive noise $\mathcal{N}(0, 10^{-6})$ for designs obtained. Shown are the means of entropy with their standard error bars.}
    \label{fig:stats5_val_1}
    \end{minipage}
\end{figure}

\section{Conclusion}\label{maxentsec:con}
In this work we have proposed two approaches, UEEG-MCMC and BEEG-AP, to Bayesian experimental design based on EIG gradient estimation. We use MCMC sampling techniques to build the gradient estimation for UEEG-MCMC, while BEEG-AP approximates the EIG gradients with atomic priors. Both of them are straightforward to implement and have demonstrated improved performance  compared to bench-marking methods.

Our theoretical analysis aligns well with the numerical results. Specifically, BEEG-AP exhibits superior simulation efficiency when dealing with problems that have small ground-truth EIG values, making it a favorable option in such cases. On the other hand, UEEG-MCMC shows robustness across various EIG levels, making it suitable for a broader range of experimental scenarios. We believe the work provides researchers and practitioners with promising tools and guidance to optimize their experiments and make informed decisions across different domains.

\section{Acknowledgments}
This work was partially supported by the China Scholarship Council (CSC).

\bibliography{bibmaxent}

\ifshowappendix
\onecolumn
\appendix

\setcounter{secnumdepth}{2} 
\numberwithin{equation}{section}
\numberwithin{figure}{section}

\setcounter{theorem}{0}
\setcounter{corollary}{0}
\setcounter{lemma}{0}
\section{Proofs of Results}
\begin{lemma}\label{app:lemma1}
    The gradient of the logarithm of the marginal density w.r.t. the experimental condition $\lambda$ admits the following representation:
    \begin{equation}
        \nabla_\lambda \log p(g(\theta,\epsilon,\lambda))|\lambda) = -\mathbb{E}_{q(\theta'|g(\theta,\epsilon,\lambda),\lambda)}[\nabla_\lambda \log l(g(\theta,\epsilon,\lambda)|\theta', \lambda)],
    \end{equation}
    where $q(\theta'|y,\lambda)\propto \pi_{\mathrm{\theta}}(\theta')l(y|\theta',\lambda)$ is the posterior density of parameters given the observation sample $y$.
\end{lemma}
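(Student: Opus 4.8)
The plan is to prove the representation directly from the integral definition of the marginal density, $p(y|\lambda)=\int \pi_\theta(\theta')\,l(y|\theta',\lambda)\,d\theta'$, together with the elementary log-derivative identity $\nabla_\lambda l = l\,\nabla_\lambda\log l$. The point of this identity is that it rewrites the gradient of the likelihood as the likelihood reweighted by a score, so that after dividing by $p(y|\lambda)$ the factor $\pi_\theta(\theta')\,l(y|\theta',\lambda)/p(y|\lambda)$ is recognised as the posterior density $q(\theta'|y,\lambda)$ and the whole expression collapses into a single posterior expectation.

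Concretely, I would first fix the realisations $\theta,\epsilon$ and substitute $y=g(\theta,\epsilon,\lambda)$, so that $\log p(g(\theta,\epsilon,\lambda)|\lambda)$ is a function of $\lambda$ alone and $\nabla_\lambda$ is a total derivative acting both through the argument $y=g(\theta,\epsilon,\lambda)$ and through the explicit conditioning on $\lambda$. Writing $\nabla_\lambda\log p = \nabla_\lambda p\,/\,p$, I would interchange the gradient with the integral over $\theta'$ to get $\nabla_\lambda p(g|\lambda)=\int \pi_\theta(\theta')\,l(g|\theta',\lambda)\,\nabla_\lambda\log l(g|\theta',\lambda)\,d\theta'$ via the log-derivative trick, and then divide by $p(g|\lambda)$; the integrand becomes $q(\theta'|g,\lambda)\,\nabla_\lambda\log l(g|\theta',\lambda)$, which is exactly the claimed posterior expectation of $\nabla_\lambda\log l(g|\theta',\lambda)$.

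As a cross-check that also isolates the sign, I would run the alternative argument from the factorisation $\log l(y|\theta',\lambda)=\log q(\theta'|y,\lambda)+\log p(y|\lambda)-\log\pi_\theta(\theta')$, obtained by equating the two ways of writing the joint log-density $\log p(\theta',y|\lambda)$: differentiating in $\lambda$, taking the expectation under $q(\theta'|g,\lambda)$, and using the Bartlett-type identity $\mathbb{E}_{q}[\nabla_\lambda\log q(\theta'|g,\lambda)]=0$ recovers the same representation. I expect two places to need care. The genuinely analytic step is justifying the interchange of $\nabla_\lambda$ and $\int d\theta'$, which needs a dominated-convergence / Leibniz argument under mild regularity on $l$ and $\nabla_\lambda l$ (an integrable dominating envelope on a neighbourhood of $\lambda$). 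The other is bookkeeping: $\nabla_\lambda\log l(g|\theta',\lambda)$ must be read as a total derivative, and I would track the resulting sign carefully so that it propagates consistently into the downstream entropy-gradient identity of Theorem~\ref{maxent:thm1} and into Corollary~\ref{matent:corollary1}.
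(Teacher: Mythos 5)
Your primary argument is the paper's own proof, step for step: write $\nabla_\lambda\log p=\nabla_\lambda p/p$, pull $\nabla_\lambda$ inside the integral $\int\pi_{\theta}(\theta')\,l(y|\theta',\lambda)\,d\theta'$, apply $\nabla_\lambda l=l\,\nabla_\lambda\log l$, and recognise $\pi_{\theta}(\theta')l(y|\theta',\lambda)/p(y|\lambda)$ as the posterior $q(\theta'|y,\lambda)$. The Bayes--Bartlett cross-check is a second route the paper does not take but is equally valid, and the regularity caveat you raise (dominated convergence to justify the interchange) is indeed passed over silently by the paper.

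The one real problem is the sign, and you stopped just short of resolving it. Both of your derivations --- and the paper's own appendix proof --- terminate in the identity
\begin{equation*}
\nabla_\lambda \log p\bigl(g(\theta,\epsilon,\lambda)\,|\,\lambda\bigr)\;=\;+\,\mathbb{E}_{q(\theta'|g(\theta,\epsilon,\lambda),\lambda)}\bigl[\nabla_\lambda \log l\bigl(g(\theta,\epsilon,\lambda)\,|\,\theta',\lambda\bigr)\bigr],
\end{equation*}
with no minus sign, so your remark that the integrand is ``exactly the claimed posterior expectation'' is not right as written: what you obtain is the \emph{negative} of the stated right-hand side. The resolution is that the minus sign in the lemma statement is an error in the paper, not something you failed to derive. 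The plus-sign identity is the one consistent with everything downstream: in Theorem~\ref{maxent:thm1} the minus sign enters only through $H(p(y|\lambda))=-\mathbb{E}_{p(y|\lambda)}[\log p(y|\lambda)]$, giving $\nabla_\lambda H=-\mathbb{E}_{\pi_{\theta}\pi_{\epsilon}}[\nabla_\lambda\log p(g|\lambda)]=-\mathbb{E}_{\pi_{\theta}\pi_{\epsilon}q}[\nabla_\lambda\log l(g|\theta',\lambda)]$, exactly as that theorem asserts; if the lemma really carried a minus sign, Theorem~\ref{maxent:thm1} and Corollary~\ref{matent:corollary1} would come out with flipped signs and would be false (a one-line linear-Gaussian check, $y=\theta\lambda+\epsilon$ with $\theta\sim\mathcal{N}(0,1)$, confirms Theorem~\ref{maxent:thm1} as stated). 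So rather than promising to ``track the resulting sign carefully,'' your proof should conclude affirmatively: the derivable identity has a plus sign, the statement's minus sign cannot be derived, and the downstream results require the plus-sign version.
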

\begin{proof}
    Let $y=g(\theta,\epsilon,\lambda)$,
    we find
    \begin{equation}
        \begin{aligned}
        & \nabla_\lambda \log p(g(\theta,\epsilon,\lambda)|\lambda) \\
        =& \nabla_\lambda \log p(y|\lambda) \\
        =& \frac{1}{p(y|\lambda)}\nabla_\lambda p(y|\lambda) \\
        =& \frac{1}{p(y|\lambda)}\nabla_\lambda\int \pi_{\mathrm{\theta}}(\theta') l(y|\theta',\lambda) d\theta' \\
        =& \frac{1}{p(y|\lambda)}\int \pi_{\mathrm{\theta}}(\theta') \nabla_\lambda l(y|\theta',\lambda) d\theta' \\
        =& \int {q(\theta'|y,\lambda)}\frac{\nabla_\lambda l(y|\theta',\lambda)}{l(y|\theta',\lambda)} d\theta' \\
        =& \int {q(\theta'|y,\lambda)}\nabla_\lambda\log l(y|\theta',\lambda) d\theta' \\
        =& \mathbb{E}_{q(\theta'|y,\lambda)}[\nabla_\lambda\log l(y|\theta',\lambda)],
        \end{aligned}
    \end{equation}
    Finally, by plugging $y=g(\theta,\epsilon,\lambda)$ back into the equation, we obtain
    \begin{equation}
        \nabla_\lambda \log p(g(\theta,\epsilon,\lambda)|\lambda) = \mathbb{E}_{q(\theta'|g(\theta,\epsilon,\lambda),\lambda)}[\nabla_\lambda\log l(g(\theta,\epsilon,\lambda)|\theta',\lambda)].
    \end{equation}
\end{proof}

\begin{theorem}\label{app:thm1}
    The gradient of the entropy $H(p(y| \lambda))$ w.r.t. the experimental condition $\lambda$ satisfies
    \begin{equation}
        \nabla_\lambda H(p(y|\lambda)) = -\mathbb{E}_{\pi_\mathrm{\theta}(\theta)\pi_\mathrm{\epsilon}(\epsilon) q(\theta'|g(\theta,\epsilon,\lambda),\lambda)}[\nabla_\lambda\log l(g(\theta,\epsilon,\lambda)|\theta',\lambda)],
    \end{equation}
    where $q(\theta'|y,\lambda)\propto \pi_{\mathrm{\theta}}(\theta')l(y|\theta',\lambda)$ is the posterior density of parameters given the observation sample $y$.
\end{theorem}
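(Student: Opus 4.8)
The plan is to reduce this entropy-gradient identity to Lemma~\ref{app:lemma1} via the reparameterization already available in the sampling path $y=g(\theta,\epsilon,\lambda)$. First I would rewrite the differential entropy in its reparameterized form. Since pushing $(\theta,\epsilon)\sim\pi_\mathrm{\theta}(\theta)\pi_\mathrm{\epsilon}(\epsilon)$ through $g(\cdot,\cdot,\lambda)$ produces a sample distributed according to $p(y|\lambda)$, the law of the unconscious statistician gives
\begin{equation}
H(p(y|\lambda)) = -\mathbb{E}_{p(y|\lambda)}[\log p(y|\lambda)] = -\mathbb{E}_{\pi_\mathrm{\theta}(\theta)\pi_\mathrm{\epsilon}(\epsilon)}[\log p(g(\theta,\epsilon,\lambda)|\lambda)].
\end{equation}
The decisive gain here is that the measure $\pi_\mathrm{\theta}(\theta)\pi_\mathrm{\epsilon}(\epsilon)$ no longer depends on $\lambda$, so the entire $\lambda$-dependence now sits inside the integrand.

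Next I would interchange differentiation and expectation. Because the sampling distribution is $\lambda$-free, under standard dominated-convergence regularity I can move $\nabla_\lambda$ inside the expectation,
\begin{equation}
\nabla_\lambda H(p(y|\lambda)) = -\mathbb{E}_{\pi_\mathrm{\theta}(\theta)\pi_\mathrm{\epsilon}(\epsilon)}[\nabla_\lambda\log p(g(\theta,\epsilon,\lambda)|\lambda)].
\end{equation}
It is important to read $\nabla_\lambda\log p(g(\theta,\epsilon,\lambda)|\lambda)$ as a \emph{total} derivative in $\lambda$: it collects both the explicit dependence of the log-marginal on the design and the implicit dependence routed through the reparameterized argument $y=g(\theta,\epsilon,\lambda)$. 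This is exactly the quantity for which Lemma~\ref{app:lemma1} supplies a closed-form posterior-expected representation.

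Finally I would substitute Lemma~\ref{app:lemma1}, which expresses $\nabla_\lambda\log p(g(\theta,\epsilon,\lambda)|\lambda)$ as an expectation over the posterior $q(\theta'|g(\theta,\epsilon,\lambda),\lambda)$ of the tractable likelihood score $\nabla_\lambda\log l(g(\theta,\epsilon,\lambda)|\theta',\lambda)$. Plugging this in produces a nested expectation, and folding the inner posterior expectation together with the outer prior--noise expectation into a single joint expectation over $\pi_\mathrm{\theta}(\theta)\pi_\mathrm{\epsilon}(\epsilon)q(\theta'|g(\theta,\epsilon,\lambda),\lambda)$ yields the claimed identity, with the overall minus sign inherited from the definition of entropy. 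I expect the only genuine obstacle to be the justification of the differentiation-under-the-integral step: one must verify integrability and domination of $\nabla_\lambda\log p(g(\theta,\epsilon,\lambda)|\lambda)$ so that the interchange is valid, and implicitly that the marginal $p(y|\lambda)$ is smooth enough in $\lambda$ for the score to exist. Everything downstream of that is a direct application of Lemma~\ref{app:lemma1} together with Fubini's theorem to combine the two expectations.
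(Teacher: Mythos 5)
Your proposal is correct and follows essentially the same route as the paper's proof: reparameterize the entropy as an expectation over the $\lambda$-free measure $\pi_\mathrm{\theta}(\theta)\pi_\mathrm{\epsilon}(\epsilon)$, exchange $\nabla_\lambda$ with the expectation, substitute Lemma~\ref{app:lemma1}, and fold the nested expectations into one joint expectation, with the minus sign coming from the entropy definition. One remark: you implicitly invoke Lemma~\ref{app:lemma1} in its positive-sign form, $\nabla_\lambda \log p(g(\theta,\epsilon,\lambda)|\lambda) = +\mathbb{E}_{q(\theta'|g(\theta,\epsilon,\lambda),\lambda)}[\nabla_\lambda \log l(g(\theta,\epsilon,\lambda)|\theta',\lambda)]$, which is what the lemma's own proof actually establishes and is the reading needed for the signs to come out right --- the minus sign printed in the lemma's statement is inconsistent with its proof and would otherwise flip the theorem's sign.
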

\begin{proof}
By the reparameterization trick, we have
    \begin{equation}
        \begin{aligned}
            \nabla_\lambda H(p(y|\lambda)) &= -\nabla_\lambda \mathbb{E}_{\pi_\mathrm{\theta}(\theta)\pi_\mathrm{\epsilon}(\epsilon)}[\log p(g(\theta,\epsilon,\lambda)|\lambda)]
            \\&= -\mathbb{E}_{\pi_\mathrm{\theta}(\theta)\pi_\mathrm{\epsilon}(\epsilon)}[\nabla_\lambda\log p(g(\theta,\epsilon,\lambda)|\lambda)].
        \end{aligned}
    \end{equation}
    Using Lemma~\ref{app:lemma1}, we can substitute the gradient of the logarithm of the marginal density in the above equation by the posterior expectation of the logarithm of the likelihood function.
\end{proof}

\begin{corollary}\label{app:cor1}
    The gradient of the EIG $U(\lambda)$ w.r.t. the experimental condition $\lambda$ satisfies
    \begin{equation}\label{maxent:eq7}
        \nabla_\lambda U(\lambda) =
        \mathbb{E}_{\pi_\mathrm{\theta}(\theta)\pi_\mathrm{\epsilon}(\epsilon) q(\theta'|g(\theta,\epsilon,\lambda),\lambda)}[\nabla_\lambda\log l(g(\theta,\epsilon,\lambda)|\theta,\lambda)-\nabla_\lambda\log l(g(\theta,\epsilon,\lambda)|\theta',\lambda)],
    \end{equation}
    where $q(\theta'|y,\lambda)\propto \pi_{\mathrm{\theta}}(\theta')l(y|\theta',\lambda)$ is the posterior density of parameters given the observation sample $y$.
\end{corollary}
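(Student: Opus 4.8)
The plan is to split $U(\lambda)$ into its two defining pieces from Eq.~\eqref{maxent:eq1} and differentiate each with the tool appropriate to it, then recombine. The second term $-\mathbb{E}_{p(y|\lambda)}[\log p(y|\lambda)]$ is precisely the marginal entropy $H(p(y|\lambda))$, so I would write $U(\lambda) = \mathbb{E}_{\pi_\mathrm{\theta}(\theta)l(y|\theta,\lambda)}[\log l(y|\theta,\lambda)] + H(p(y|\lambda))$ and, by linearity, treat $\nabla_\lambda U(\lambda)$ as the sum of the gradients of these two terms.

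For the first (conditional-entropy) term I would apply the reparameterization trick: rewrite the expectation over $\pi_\mathrm{\theta}(\theta)l(y|\theta,\lambda)$ as an expectation over $\pi_\mathrm{\theta}(\theta)\pi_\mathrm{\epsilon}(\epsilon)$ via the substitution $y = g(\theta,\epsilon,\lambda)$, then push $\nabla_\lambda$ inside the expectation to obtain $\mathbb{E}_{\pi_\mathrm{\theta}(\theta)\pi_\mathrm{\epsilon}(\epsilon)}[\nabla_\lambda \log l(g(\theta,\epsilon,\lambda)|\theta,\lambda)]$. For the second term I would invoke Theorem~\ref{app:thm1} verbatim, which already delivers $\nabla_\lambda H(p(y|\lambda)) = -\mathbb{E}_{\pi_\mathrm{\theta}(\theta)\pi_\mathrm{\epsilon}(\epsilon)q(\theta'|g(\theta,\epsilon,\lambda),\lambda)}[\nabla_\lambda \log l(g(\theta,\epsilon,\lambda)|\theta',\lambda)]$.

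The final step is to merge the two expectations into the single joint expectation claimed in Eq.~\eqref{maxent:eq7}. The key observation is that the integrand of the first term depends only on $(\theta,\epsilon)$ and not on the auxiliary variable $\theta'$; since $q(\theta'|g(\theta,\epsilon,\lambda),\lambda)$ is a normalized posterior density, integrating to one in $\theta'$ for each fixed $(\theta,\epsilon,\lambda)$, I can augment the sampling distribution of the first term with $q(\theta'|\cdot)$ without altering its value. Both terms then share the identical measure $\pi_\mathrm{\theta}(\theta)\pi_\mathrm{\epsilon}(\epsilon)q(\theta'|g(\theta,\epsilon,\lambda),\lambda)$, and linearity of expectation collapses them into the single bracketed difference $\nabla_\lambda\log l(g(\theta,\epsilon,\lambda)|\theta,\lambda) - \nabla_\lambda\log l(g(\theta,\epsilon,\lambda)|\theta',\lambda)$, which is exactly the stated form.

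I expect no serious obstacle, as the corollary is essentially bookkeeping once Theorem~\ref{app:thm1} is granted. The only points requiring care are the interchange of differentiation and integration implicit in the reparameterization step, which needs the usual smoothness and dominated-convergence regularity on $g$ and $l$ that the paper's tractable-likelihood and reparameterizable-path assumptions supply, and the harmless-but-necessary remark that appending a normalized $\theta'$-measure to a $\theta'$-independent integrand leaves the expectation unchanged. Neither step involves any genuine computation.
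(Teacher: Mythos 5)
Your proposal is correct and follows essentially the same route as the paper's proof: split $U(\lambda)$ into the conditional-entropy term and the marginal entropy $H(p(y|\lambda))$, handle the first by the reparameterization trick, invoke Theorem~\ref{app:thm1} for the second, and merge the two expectations by noting that the $\theta'$-independent integrand may be averaged over the normalized posterior $q(\theta'|g(\theta,\epsilon,\lambda),\lambda)$ without change. The paper's proof performs exactly this augmentation step in its displayed equation before combining with Theorem~\ref{app:thm1}, so there is no substantive difference.
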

\begin{proof}
   Applying the reparameterization trick to the gradient of the negative conditional entropy term $\mathbb{E}_{\pi_{\mathrm{\theta}(\theta)l(y|\theta, \lambda)}}[\log l(y|\theta, \lambda)]$ w.r.t. the experimental condition $\lambda$, we have
    \begin{equation}
        \begin{aligned}            \nabla_\lambda\mathbb{E}_{\pi_{\mathrm{\theta}(\theta)l(y|\theta, \lambda)}}[\log l(y|\theta, \lambda)]
        &=
        \mathbb{E}_{\pi_\mathrm{\theta}(\theta)\pi_\mathrm{\epsilon}(\epsilon)}[\nabla_\lambda\log l(g(\theta,\epsilon,\lambda)|\theta, \lambda)]\\
        & = \mathbb{E}_{\pi_\mathrm{\theta}(\theta)\pi_\mathrm{\epsilon}(\epsilon)q(\theta'|g(\theta,\epsilon,\lambda),\lambda)}[\nabla_\lambda\log l(g(\theta,\epsilon,\lambda)|\theta, \lambda)].
        \end{aligned}
    \end{equation}
    Combining the above equation with Theorem~\ref{app:thm1}, we finally get Eq.~\eqref{maxent:eq7}.
\end{proof}

\begin{theorem}\label{app:thm2}
    The expectation of $\widehat{U}_{srNMC}^{M}(\lambda)$ satisfies the following: 
    \begin{enumerate}
        \item  $\mathbb{E}[\widehat{U}_{srNMC}^{M}(\lambda)]$ is a lower bound on $U(\lambda)$ for any $M>0$.
        \item  $\mathbb{E}[\widehat{U}_{srNMC}^{M}(\lambda)]$ is monotonically increasing in $M$, i.e.,
        $\mathbb{E}[\widehat{U}_{srNMC}^{M_1}(\lambda)]\leq\mathbb{E}[ \widehat{U}_{srNMC}^{M_2}(\lambda)]$ for $0\leq M_1\leq M_2$. 
    \end{enumerate}
\end{theorem}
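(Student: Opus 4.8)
The plan is to recast $\mathbb{E}[\widehat{U}_{srNMC}^{M}(\lambda)]$ as a conditional mutual information in an auxiliary ``index-classification'' model and then read off both claims from elementary information-theoretic identities. I fix $\lambda$ and suppress it throughout. First I would invoke exchangeability of the i.i.d. pairs $(\theta^{(i)},\epsilon^{(i)})$: every summand in the definition of $\widehat{U}_{srNMC}^{M}$ has the same expectation, so it suffices to analyze the single-term quantity $\mathbb{E}[\widehat{U}_{srNMC}^{M}] = \mathbb{E}\big[\log\frac{l(y^{(1)}|\theta^{(1)})}{\frac{1}{M}\sum_{j=1}^{M} l(y^{(1)}|\theta^{(j)})}\big]$, where $\theta^{(1)}$ is the ``positive'' sample generating $y^{(1)}$ and $\theta^{(2)},\dots,\theta^{(M)}$ are contrastive prior draws.

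Next I introduce the auxiliary model: draw $\Theta_1,\dots,\Theta_M\sim\pi_\theta$ i.i.d., an index $I\sim\mathrm{Unif}\{1,\dots,M\}$ independently, and $Y\sim l(\cdot\mid\Theta_I)$. Because the prior on $I$ is uniform and $Y$ depends on the sample set only through $\Theta_I$, the index posterior is $p(I=i\mid Y,\Theta_{1:M}) = \frac{l(Y\mid\Theta_i)}{\sum_{j} l(Y\mid\Theta_j)}$. Since $(\Theta_I,Y)\sim\pi_\theta(\theta)l(y\mid\theta)$ has exactly the joint law of one prior-likelihood pair, the single-term expectation above matches this model and equals $\log M - H(I\mid Y,\Theta_{1:M}) = I(I;Y\mid\Theta_{1:M})$. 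For part~1, I then apply the chain rule $I(I,\Theta_{1:M};Y)=I(\Theta_{1:M};Y)+I(I;Y\mid\Theta_{1:M})$ together with the data-processing argument $I(I,\Theta_{1:M};Y)=I(\Theta_I;Y)=U$, which holds because $\Theta_I$ is a deterministic function of $(I,\Theta_{1:M})$ while $Y\perp(I,\Theta_{1:M})\mid\Theta_I$. This gives $\mathbb{E}[\widehat{U}_{srNMC}^{M}] = U - I(\Theta_{1:M};Y)\le U$, with nonnegative gap $I(\Theta_{1:M};Y)$.

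For part~2 I would write the gap explicitly as $I(\Theta_{1:M};Y)=H(p)-\mathbb{E}_{\Theta_{1:M}}[H(\bar{l}_M)]$, where $\bar{l}_M(\cdot)=\frac{1}{M}\sum_i l(\cdot\mid\Theta_i)$ and $p$ is the $M$-independent marginal of $Y$. Monotone increase of $\mathbb{E}[\widehat{U}_{srNMC}^{M}]$ in $M$ is then equivalent to $\mathbb{E}[H(\bar{l}_M)]$ being nondecreasing. I establish this from the leave-one-out identity $\bar{l}_{M+1}=\frac{1}{M+1}\sum_{k=1}^{M+1}\bar{l}_M^{(-k)}$ (with $\bar{l}_M^{(-k)}$ the mixture omitting the $k$-th component), the concavity of differential entropy, and the fact that each $\bar{l}_M^{(-k)}$ has the same law as $\bar{l}_M$; the general case $M_1\le M_2$ follows by iterating the single-step inequality.

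The hard part is structural rather than computational. A naive route---applying Jensen directly to the estimator after a leave-one-out decomposition of the $(M+1)$-sample denominator---fails, because the positive sample $\theta^{(1)}$ must remain in every denominator, so dropping components produces a mix of genuine $M$-sample estimators (when a contrastive sample is removed) and an invalid term (when the positive is removed), and the convexity of $-\log$ then points the wrong way. The real work is therefore (i) isolating the conditional-MI representation that makes the data-processing step available for part~1, and (ii) routing monotonicity through concavity of entropy instead of through the estimator itself; one must also verify the mild integrability/finiteness conditions needed for the mutual informations and differential entropies to be well defined.
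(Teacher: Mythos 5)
Your proof is correct, but it takes a genuinely different route from the paper's. The paper follows the prior-contrastive analysis of Foster et al.: after the same exchangeability reduction to a single outer term, it algebraically rewrites the gap $U(\lambda)-\mathbb{E}[\widehat{U}_{srNMC}^{M}(\lambda)]$ as an expected log-ratio of the mixture density $P(\theta^{(1:M)}|y^{(1)})=\frac{1}{M}\sum_{j} q(\theta^{(j)}|y^{(1)},\lambda)\prod_{k\neq j}\pi_{\theta}(\theta^{(k)})$ to the product prior, invokes permutation symmetry of the integrand to replace the actual sampling distribution by $P$ itself, and concludes the gap equals $\mathbb{E}_{p(y^{(1)}|\lambda)}\big[\mathrm{D_{KL}}\big(P \,\|\, \prod_{j}\pi_{\theta}(\theta^{(j)})\big)\big]\geq 0$; monotonicity is proved by the identical device applied to the two nested mixtures $Q$ (over the first $M_1$ components) and $P$ (over all $M_2$ components), giving a gap of $\mathbb{E}\big[\mathrm{D_{KL}}(Q\|P)\big]\geq 0$. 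Your latent-index construction produces the very same gap --- your $I(\Theta_{1:M};Y)$ \emph{is} the paper's expected KL term, since the posterior of $\Theta_{1:M}$ given $Y$ in your auxiliary model is exactly the mixture $P$ --- but you reach it by the chain rule plus data processing rather than by the permutation-symmetry computation, and your part~2 (concavity of entropy plus the leave-one-out mixture identity) is a genuinely different argument from the paper's nested-KL one. What each buys: your route gives a structural interpretation (the deficit is the mutual information between the whole contrastive batch and $Y$, which visibly vanishes as $M\to\infty$, and your identity $\mathbb{E}[\widehat{U}_{srNMC}^{M}]=\log M-H(I|Y,\Theta_{1:M})$ also recovers the $\log M$ cap used in the paper's Theorem~4 for free), and your diagnosis of why a naive Jensen/leave-one-out on the estimator fails is precisely the obstruction the paper's symmetry trick circumvents. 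The paper's route is more robust measure-theoretically: it works entirely with KL divergences, which are nonnegative whenever defined, whereas your part~2 decomposes $I(\Theta_{1:M};Y)=H(p)-\mathbb{E}[H(\bar{l}_M)]$ and therefore needs the differential entropies to be finite --- the integrability caveat you flag yourself, and one the paper never has to address.
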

\begin{proof}
    To prove the first result in Theorem~\ref{app:thm2}, we first note that the outer Monte Carlo terms are identically distributed. Thus,
    \begin{equation}
        \begin{aligned}
            \mathbb{E}[\widehat{U}_{srNMC}^{M}(\lambda)] = \mathbb{E}\Bigg[ \log\frac{l(y^{(1)}|\theta^{(1)}, \lambda)}{\frac{1}{M}\sum_{j=1}^M l(y^{(1)}|\theta^{(j)}, \lambda)}\Bigg],
        \end{aligned}
    \end{equation}
    where $y^{(1)} = g(\theta^{(1)},\epsilon^{(1)},\lambda)$ and the expectation is taken over $p(y^{(1)}|\lambda)q(\theta^{(1)}|y^{(1)}, \lambda)\prod_{j=2}^M \pi_\mathrm{\theta}(\theta^{(j)})$. We proceed the rest as in \cite{foster2020unified}.
    We let $\delta = U(\lambda) - \mathbb{E}[\widehat{U}_{srNMC}^{M}(\lambda)]$, then
    \begin{equation}
        \begin{aligned}
            \delta &= \mathbb{E}\Bigg[\log\frac{\frac{1}{M}\sum_{j=1}^M l(y^{(1)}|\theta^{(j)}, \lambda)}{p(y^{(1)}|\lambda)}\Bigg] \\
            & = \mathbb{E}\Bigg[\log\frac{\frac{1}{M}\sum_{j=1}^M q(\theta^{(j)}|y^{(1)}, \lambda)\prod_{k\neq j} \pi_\mathrm{\theta}(\theta^{(k)})}{\prod_{j=1}^M \pi_\mathrm{\theta}(\theta^{(j)})}\Bigg]\\
            & = \mathbb{E}\Bigg[\log\frac{P(\theta^{(1:M)}|y^{(1)})}{\prod_{j=1}^M \pi_\mathrm{\theta}(\theta^{(j)})}\Bigg],
        \end{aligned}
    \end{equation}
    where $P(\theta^{(1:M)}|y^{(1)}) = \frac{1}{M}\sum_{j=1}^M q(\theta^{(j)}|y^{(1)}, \lambda)\prod_{k\neq j} \pi_\mathrm{\theta}(\theta^{(k)})$. Due to the permutation symmetry of the integrand over the labels $1,...,M$, the expectation keeps the same if it is instead taken over $p(y^{(1)}|\lambda)P(\theta^{(1:M)}|y^{(1)})$. Thus we have
    \begin{equation}
        \delta = \mathbb{E}\bigg[\mathrm{D_{KL}}\bigg(P(\theta^{(1:M)}|y^{(1)}) \| \prod_{j=1}^M \pi_\mathrm{\theta}(\theta^{(j)})\bigg)\bigg] \geq 0,
    \end{equation}
    where the expectation is taken over $p(y^{(1)}|\lambda)$.

    To prove the second result, as in the former proof we let $\phi = \mathbb{E}[ \widehat{U}_{srNMC}^{M_2}(\lambda)]-\mathbb{E}[\widehat{U}_{srNMC}^{M_1}(\lambda)]$. Then
    \begin{equation}
        \begin{aligned}
            \phi &= \mathbb{E}\Bigg[\log\frac{\frac{1}{M_1}\sum_{j=1}^{M_1} l(y^{(1)}|\theta^{(j)}, \lambda)}{\frac{1}{M_2}\sum_{j=1}^{M_2} l(y^{(1)}|\theta^{(j)}, \lambda)}\Bigg] \\
            & = \mathbb{E}\Bigg[\log\frac{Q(\theta^{(1:M_2)}|y^{(1)})}{P(\theta^{(1:M_2)}|y^{(1)})}\Bigg],
        \end{aligned}
    \end{equation}
    where the expectation is taken over $p(y^{(1)}|\lambda)q(\theta^{(1)}|y^{(1)}, \lambda)\prod_{j=2}^{M_2} \pi_\mathrm{\theta}(\theta^{(j)})$ and $Q(\theta^{(1:M_2)}|y^{(1)}) = \frac{1}{M_1}\sum_{j=1}^{M_1} q(\theta^{(j)}|y^{(1)}, \lambda)\prod_{k\neq j}^{M_2} \pi_\mathrm{\theta}(\theta^{(k)})$. Again, using the permutation symmetry, we can get the same expectation if the sampling distribution is taken as $p(y^{(1)}|\lambda)Q(\theta^{(1:M_2)}|y^{(1)})$ instead. Thus,
    \begin{equation}
        \phi = \mathbb{E}\bigg[\mathrm{D_{KL}}\bigg(Q(\theta^{(1:M_2)}|y^{(1)}) \| P(\theta^{(1:M_2)}|y^{(1)})\bigg)\bigg] \geq 0,
    \end{equation}
    where the expectation is taken over $p(y^{(1)}|\lambda)$.
\end{proof}

\begin{theorem}\label{app:thm3}
    If $l(g(\theta,\epsilon,\lambda)|\theta', \lambda)$ is bounded away from 0 and uniformly bounded from above (i.e., $C_1\leq l(g(\theta,\epsilon,\lambda)|\theta', \lambda)\leq C_2$ a.s. for some positive constants $C_1$ and $C_2$), then
    the mean squared error of $\widehat{U}_{srNMC}^{M}(\lambda)$ converges to 0 at rate $O (1/M)$.
\end{theorem}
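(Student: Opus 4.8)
The plan is to split the mean squared error in the usual way,
\[
\mathbb{E}\big[(\widehat{U}_{srNMC}^{M}(\lambda) - U(\lambda))^2\big] = \mathrm{Bias}^2 + \mathrm{Var}\big(\widehat{U}_{srNMC}^{M}(\lambda)\big),
\]
and to show that the bias is $O(1/M)$ while the variance is $O(1/M)$, so that the squared bias is of lower order $O(1/M^2)$ and the overall rate is dictated by the variance. Throughout I write $y^{(i)} = g(\theta^{(i)},\epsilon^{(i)},\lambda)$, $h_i = \frac{1}{M}\sum_{j=1}^M l(y^{(i)}|\theta^{(j)},\lambda)$, and recall that the evidence satisfies $p(y|\lambda) = \mathbb{E}_{\theta'}[l(y|\theta',\lambda)]$. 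The hypothesis $C_1 \le l \le C_2$ is precisely what makes the argument go through: it forces $C_1 \le h_i,\, p(y^{(i)}|\lambda) \le C_2$, so every log-ratio $\log(l/p)$ and $\log(h_i/p)$ lies in a fixed bounded interval on which $\log$ is Lipschitz with bounded second derivative. Consequently all second moments and all Taylor remainders appearing below are uniformly bounded.

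For the bias I would use that the outer terms are identically distributed (exactly as in the proof of Theorem~\ref{app:thm2}) to reduce to a single summand, so that the bias equals $\mathbb{E}\big[\log\big(p(y^{(1)}|\lambda)/h_1\big)\big]$. Conditioning on $y^{(1)}$, the denominator $h_1$ is an average of $M$ contributions whose conditional mean equals $p(y^{(1)}|\lambda)$ up to the single self-term (the $j=1$ summand), which contributes only $O(1/M)$. A second-order Taylor expansion of $\log$ around $p(y^{(1)}|\lambda)$ then produces a first-order piece of size $O(1/M)$ together with a second-order piece $\tfrac{1}{2}\,\mathrm{Var}(h_1\mid y^{(1)})/p(y^{(1)}|\lambda)^2$; since $\mathrm{Var}(h_1\mid y^{(1)}) = O(1/M)$ (an average of $M$ bounded conditionally i.i.d.\ terms) and the remainder is controlled by boundedness, taking the outer expectation over $y^{(1)}$ gives $\mathrm{Bias} = O(1/M)$, hence $\mathrm{Bias}^2 = O(1/M^2)$.

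The harder part, and the main obstacle, is the variance: sample reuse couples all $M$ outer summands through the shared denominator samples $\theta^{(j)}$, so the terms are not independent and the variance is not automatically $O(1/M)$. Writing $\widehat{U}_{srNMC}^{M}(\lambda) = \frac{1}{M}\sum_i Z_i$ with $Z_i = \log l(y^{(i)}|\theta^{(i)},\lambda) - \log h_i$, I would expand $\mathrm{Var} = \frac{1}{M^2}\sum_{i,i'}\mathrm{Cov}(Z_i,Z_{i'})$. The $M$ diagonal terms are bounded variances and contribute $O(1/M)$. For $i\neq i'$ the two likelihood terms $\log l(y^{(i)}|\theta^{(i)},\lambda)$ are independent, so the covariance is carried entirely by the denominators; linearising $\log h_i = \log p(y^{(i)}|\lambda) + (h_i/p(y^{(i)}|\lambda) - 1) + O((h_i/p-1)^2)$ via boundedness, the fluctuation of $h_i$ is a $\frac{1}{M}$-weighted sum of centred, conditionally independent contributions from each $\theta^{(j)}$. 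Cross-terms between the fluctuations of $h_i$ and $h_{i'}$ vanish in expectation unless they involve the \emph{same} sample $\theta^{(j)}$, and there are only $O(M)$ coincident indices, each weighted $1/M^2$, so $\mathrm{Cov}(Z_i,Z_{i'}) = O(1/M)$ uniformly in $i\neq i'$. Summing the $O(M^2)$ off-diagonal covariances and dividing by $M^2$ yields $O(1/M)$, whence $\mathrm{Var} = O(1/M)$ and therefore $\mathrm{MSE} = O(1/M^2) + O(1/M) = O(1/M)$. The one technical point requiring care is showing that the quadratic remainders from the linearisations are genuinely lower order, which again rests on the uniform boundedness of the likelihood ratios.
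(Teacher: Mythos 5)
Your proof is correct in outline, but it is not the route the paper takes: the paper never performs a bias--variance decomposition. Instead it bounds the root-MSE via Minkowski's inequality, $\lVert U(\lambda)-\widehat{U}_{srNMC}^{M}(\lambda)\rVert_2 \le U + V$, where the intermediate quantity is the ``oracle'' Monte Carlo estimator $\frac{1}{M}\sum_i \log\bigl[l(y^{(i)}|\theta^{(i)},\lambda)/p(y^{(i)}|\lambda)\bigr]$ built with the exact evidence; $U$ is then the error of a standard Monte Carlo average of a bounded integrand, hence $O(1/\sqrt{M})$, while $V$ is controlled term by term using the Lipschitz bound $|\log a - \log b|\le |a-b|/C_1$, a swap of the self-term $j=i$ for an independent copy $\theta'^{(i)}$ (costing $2C_2/M$), and the tower property to reduce to a conditional variance of a bounded i.i.d.\ average, again $O(1/\sqrt{M})$. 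This sidesteps entirely the covariance bookkeeping that forms the bulk of your argument: because the whole sum sits inside a single $L^2$ norm, the cross-correlations among the outer summands induced by sample reuse never have to be examined. Your decomposition requires more work --- the linearisation of $\log h_i$, the coincident-index counting behind $\mathrm{Cov}(Z_i,Z_{i'})=O(1/M)$, and the deferred control of the quadratic remainders (which does go through, via fourth moments of bounded centred averages) --- but it also buys something the paper's proof does not: a bias of order $O(1/M)$, sharper than the $O(1/\sqrt{M})$ implicit in the paper's $L^2$ bound, so you learn in addition that the MSE is variance-dominated. Both arguments ultimately rest on the same two structural facts: the uniform bounds $C_1\le l\le C_2$ make the logarithm Lipschitz (and smooth) on the relevant range, and the self-term in the denominator must be separated from the remaining $M-1$ conditionally centred terms.
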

\begin{proof}
    For simplicity, we denote $f(\theta, \epsilon) = \log\frac{l(g(\theta,\epsilon,\lambda)|\theta, \lambda)}{p(g(\theta,\epsilon,\lambda)|\lambda)}$. 
    Then the ground-truth EIG can be represented as $U(\lambda) = \mathbb{E}[f(\theta, \epsilon)]$.
    Using Minkowski’s inequality, the mean squared error of $\widehat{U}_{srNMC}^{M}(\lambda)$ can be bounded by
    \begin{equation}
        \begin{aligned}
            \mathbb{E}[(U(\lambda)-\widehat{U}_{srNMC}^{M}(\lambda))^2]= 
            \lVert U(\lambda)-\widehat{U}_{srNMC}^{M}(\lambda) \rVert^2_2
            \leq U^2+V^2+2UV
            \leq 2(U^2+V^2),
        \end{aligned}
    \end{equation}
    where the expectation is taken over $\prod_{i=1}^M \pi_\mathrm{\theta}(\theta^{(i)})\pi_\mathrm{\epsilon}(\epsilon^{(i)})$  , $U = \Big\lVert U(\lambda)-\frac{1}{M}\sum_{i=1}^M f(\theta^{(i)}, \epsilon^{(i)})\Big\rVert_2$ and $V = \Big\lVert\frac{1}{M}\sum_{i=1}^M f(\theta^{(i)}, \epsilon^{(i)}) - \widehat{U}_{srNMC}^{M}(\lambda)\Big\rVert_2$. Since $l(g(\theta,\epsilon,\lambda)|\theta', \lambda)$ is uniformly bounded from below and above, we have $f\in L^2$. Also noting that $U$ is the square of mean error of a Monte Carlo estimation, it is easy to get $U=O(1/\sqrt{M})$. Now we turn to bound $V$. Using the assumption that $l(g(\theta,\epsilon,\lambda)|\theta', \lambda)\geq C_1$ a.s., we have 
    \begin{equation}
        \begin{aligned}
            V &= \Big\lVert \frac{1}{M}\sum_{i=1}^M \log \frac{\frac{1}{M}\sum_{j=1}^M l(y^{(i)}|\theta^{(j)}, \lambda)}{p(y^{(i)}|\lambda)} \Big\rVert_2 \\
            &\leq  \frac{1}{M}\sum_{i=1}^M\Big\lVert \log \frac{\frac{1}{M}\sum_{j=1}^M l(y^{(i)}|\theta^{(j)}, \lambda)}{p(y^{(i)}|\lambda)} \Big\rVert_2 \\
            & \leq \frac{1}{C_1 M}\sum_{i=1}^M\Big\lVert {\frac{1}{M}\sum_{j=1}^M l(y^{(i)}|\theta^{(j)}, \lambda)}-{p(y^{(i)}|\lambda)} \Big\rVert_2,
        \end{aligned}
    \end{equation}
    where $y^{(i)} = g(\theta^{(i)},\epsilon^{(i)},\lambda)$. For each term of the above equation, by Minkowski’s inequality we have
    \begin{equation}
        \begin{aligned}
            &\Big\lVert {\frac{1}{M}\sum_{j=1}^M l(y^{(i)}|\theta^{(j)}, \lambda)}-{p(y^{(i)}|\lambda)} \Big\rVert_2 
            \\
            \leq& \frac{1}{M}\Big\lVert l(y^{(i)}|\theta^{(i)}, \lambda)-l(y^{(i)}|\theta'^{(i)}, \lambda) \Big\rVert_2 + \Big\lVert {\frac{1}{M}\sum_{j\neq i}^M l(y^{(i)}|\theta^{(j)}, \lambda)}+\frac{1}{M}l(y^{(i)}|\theta'^{(i)}, \lambda)-{p(y^{(i)}|\lambda)} \Big\rVert_2,
        \end{aligned}
    \end{equation}
    where $\theta'^{(i)}\sim \pi_{\theta}(\theta)$. Using the assumption that $l(g(\theta,\epsilon,\lambda)|\theta', \lambda)\leq C_2$, the first term of above equation can be bounded by $2C_2/M$. 
    The square of the second equation can be bounded as 
    \begin{equation}
        \begin{aligned}
            &\Big\lVert {\frac{1}{M}\sum_{j\neq i}^M l(y^{(i)}|\theta^{(j)}, \lambda)}+\frac{1}{M}l(y^{(i)}|\theta'^{(i)}, \lambda)-{p(y^{(i)}|\lambda)} \Big\rVert_2^2
            \\
            =& \mathbb{E}\Big[\mathbb{E}\Big[ \Big({\frac{1}{M}\sum_{j\neq i}^M l(y^{(i)}|\theta^{(j)}, \lambda)}+\frac{1}{M}l(y^{(i)}|\theta'^{(i)}, \lambda)-{p(y^{(i)}|\lambda)}\Big)^2
            \Big|y^{(i)} \Big]\Big]\\
            =&\mathbb{E}\Big[\mathrm{Var}\Big[ {\frac{1}{M}\sum_{j\neq i}^M l(y^{(i)}|\theta^{(j)}, \lambda)}+\frac{1}{M}l(y^{(i)}|\theta'^{(i)}, \lambda)
            \Big|y^{(i)} \Big]\Big]\\
            =& \frac{1}{M}\mathbb{E}\Big[\mathrm{Var}\Big[l(y^{(i)}|\theta'^{(i)}, \lambda)
            \Big|y^{(i)} \Big]\Big]\\
        \end{aligned}
    \end{equation}
    where the first equality is obtained by the tower property of conditional expectation. It can be further bounded $O(1/M)$ noting that $l(y^{(i)}|\theta'^{(i)}, \lambda)$ is uniformly bounded from below and above. Therefore, we have $V=O(1/\sqrt{M})$ as well. Finally, using the obtained bounds of $U$ and $V$ we get the mean squared error of $\widehat{U}_{srNMC}^{M}(\lambda)$ 
    \begin{equation}
        \mathbb{E}[(U(\lambda)-\widehat{U}_{srNMC}^{M}(\lambda))^2]
        \leq 2(U^2+V^2)=O(1/M).
    \end{equation}
    
\end{proof}

\begin{theorem}\label{app:thm4}
    For any $C$ satisfying $0\leq C\leq U(\lambda)/2$, if $M\leq \exp(U(\lambda)/2)$, we have 
    \begin{equation}
        U(\lambda) - \widehat{U}_{srNMC}^{M}(\lambda)>C.
    \end{equation}
\end{theorem}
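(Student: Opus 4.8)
The plan is to avoid any probabilistic concentration argument and instead exploit a \emph{deterministic} ceiling on $\widehat{U}_{srNMC}^{M}(\lambda)$ that follows purely from the self-normalized structure of Eq.~\eqref{maxent:eq6}. The crucial observation is that in the $i$-th outer summand, the inner average in the denominator already contains the matching $j=i$ term, namely $\tfrac{1}{M}\,l(y^{(i)}(\lambda)\mid\theta^{(i)},\lambda)$, so the ratio can never be large.

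First I would bound each outer term. Since every likelihood value is nonnegative, discarding all $j\neq i$ contributions from the denominator can only enlarge the ratio, giving
\begin{equation}
\log\frac{l(y^{(i)}(\lambda)\mid\theta^{(i)},\lambda)}{\frac{1}{M}\sum_{j=1}^M l(y^{(i)}(\lambda)\mid\theta^{(j)},\lambda)} \leq \log\frac{l(y^{(i)}(\lambda)\mid\theta^{(i)},\lambda)}{\frac{1}{M}\,l(y^{(i)}(\lambda)\mid\theta^{(i)},\lambda)} = \log M.
\end{equation}
Averaging over $i=1,\dots,M$ then yields the ceiling $\widehat{U}_{srNMC}^{M}(\lambda)\leq \log M$, which holds for every realization of the samples (hence almost surely). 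This is exactly analogous to the $\log(N+1)$ ceiling already noted for PCE in the main text, which is why PCE inherits the same limitation.

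Second, I would convert the hypothesis on $M$ into a bound on this ceiling: the assumption $M\leq \exp(U(\lambda)/2)$ is equivalent to $\log M\leq U(\lambda)/2$. Chaining the two inequalities gives $\widehat{U}_{srNMC}^{M}(\lambda)\leq U(\lambda)/2$, and therefore $U(\lambda)-\widehat{U}_{srNMC}^{M}(\lambda)\geq U(\lambda)/2\geq C$ for every admissible $C$. To upgrade $\geq$ to the strict inequality $>$ in the statement, I would note that when the likelihood is strictly positive and $M\geq 2$ there is at least one index $j\neq i$ with $l(y^{(i)}(\lambda)\mid\theta^{(j)},\lambda)>0$, so the denominator strictly exceeds $\tfrac{1}{M}\,l(y^{(i)}(\lambda)\mid\theta^{(i)},\lambda)$ and the per-term bound, and hence $\widehat{U}_{srNMC}^{M}(\lambda)<\log M$, becomes strict; combined with $\log M\leq U(\lambda)/2$ this forces $\widehat{U}_{srNMC}^{M}(\lambda)<U(\lambda)/2$, so $U(\lambda)-\widehat{U}_{srNMC}^{M}(\lambda)>U(\lambda)/2\geq C$. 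The degenerate case $M=1$ is handled directly, since then $\widehat{U}_{srNMC}^{1}(\lambda)=\log 1 = 0$ and $U(\lambda)-0=U(\lambda)>C$ whenever $U(\lambda)>0$.

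I do not expect a genuine obstacle: the entire argument is elementary once the self-normalization ceiling $\widehat{U}_{srNMC}^{M}(\lambda)\leq\log M$ is spotted, and this single inequality carries the whole result. The only point demanding a little care is the strictness bookkeeping at the boundary $C=U(\lambda)/2$ (together with the edge cases $M=1$ and $U(\lambda)=0$), all of which the strict form of the ceiling and the direct $M=1$ computation dispose of cleanly.
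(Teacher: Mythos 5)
Your proof takes essentially the same route as the paper's: the key step in both is the self-normalization ceiling $\widehat{U}_{srNMC}^{M}(\lambda)\leq\log M$, obtained by discarding the $j\neq i$ terms in the denominator, followed by converting $M\leq\exp(U(\lambda)/2)$ into $\log M\leq U(\lambda)/2$ and chaining the inequalities. If anything you are more careful than the paper, whose proof only establishes $U(\lambda)-\widehat{U}_{srNMC}^{M}(\lambda)\geq C$ and silently ignores the strict inequality claimed in the statement; your positivity argument for $M\geq 2$ (and the direct $M=1$ computation) supplies exactly that missing strictness bookkeeping, modulo the degenerate case $U(\lambda)=0$ which neither proof can rescue.
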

\begin{proof}
    We first show that $\widehat{U}_{srNMC}^{M}(\lambda)$ does not exceed $\log M$. By the definition of srNMC, we have 
    \begin{equation}
        \begin{aligned}
            \widehat{U}_{srNMC}^{M}(\lambda) &= \frac{1}{M}\sum_{i=1}^M \log\frac{l(y^{(i)}|\theta^{(i)}, \lambda)}{\frac{1}{M}\sum_{j=1}^M l(y^{(i)}|\theta^{(j)}, \lambda)}\\
            &\leq  \frac{1}{M}\sum_{i=1}^M \log\frac{l(y^{(i)}|\theta^{(i)}, \lambda)}{\frac{1}{M} l(y^{(i)}|\theta^{(i)}, \lambda)}\\
            &\leq \frac{1}{M}\sum_{i=1}^M \log M = \log M,
        \end{aligned}    
    \end{equation} 
    where $y^{(i)} = g(\theta^{(i)},\epsilon^{(i)},\lambda)$.
    Using this inequality and $M\leq \exp(U(\lambda)/2)$, it is easy to get 
    \begin{equation}
        \begin{aligned}
            U(\lambda) - \widehat{U}_{srNMC}^{M}(\lambda) &\geq U(\lambda) - \log M \\
            & \geq U(\lambda)/2 \geq C.
        \end{aligned}
    \end{equation}
\end{proof}

\section{Further details of experiments}
\subsection{EIG Gradient Estimation Accuracy}
We assume 3 design variables (i.e. $\lambda=(\lambda_1,\lambda_2,\lambda_3)$) for this test. The 20 independent designs are uniformly drawn from $[-1,1]^3$. To estimate the biases associated with the methods under investigation, we perform 100 independent trials. Table~\ref{maxent:tab1} summarizes the number of samples used to estimate the gradient for a single design.

\begin{table}[ht]
\centering
\begin{tabular}{|l|l|}
\hline
Method & Number of samples \\
\hline
BEEG-AP & $100\times 100(M)$ \\
UEEG-MCMC & $100\times 100(L)$ \\
PCE & $100\times 100(M)\times (100(N)+1)$ \\
\hline
\end{tabular}
\caption{Method and number of samples for the test of EIG gradient estimation accuracy.}
\label{maxent:tab1}
\end{table}

\subsection{A Toy Algebraic Model}\label{maxent:app_toy}
In this test, we allocate a simulation budget of $2\times 10^4$ for optimizing the design variables for each method. For UEEG-MCMC, we use slice sampling \cite{neal2003slice} with a thinning factor of 2 to draw 10 samples for the posterior sampling in Eq.~\eqref{maxent:eq4}. For ACE, we use a conditional Gaussian posterior inference network $q_\phi(\cdot|y) = \mathrm{normpdf}(\cdot|\mu_{\phi}(y), e^{2\sigma_{\phi}(y)})$, where $\mu_{\phi}$ and $\sigma_{\phi}$ are the two outputs of a two-layer fully connected network with 50 hidden units and ReLU activation functions. For GradBED, we use a two-layer fully connected network $T_{\psi}$ with 50 hidden units and ReLU activation functions. The number of samples used to estimate a single gradient for each method is given by Table~\ref{maxent:tab2}.

\begin{table}[ht]
\centering
\begin{tabular}{|l|l|}
\hline
Method & Number of samples \\
\hline
BEEG-AP & $100(M)$ \\
UEEG-MCMC & $10(M)\times \mathrm{No.~samples~from~slice~sampling}$ \\
ACE & $10(M)\times (10(N)+1)$ \\
PCE & $10(M)\times (10(N)+1)$ \\
GradBED & $100(M)$\\
\hline
\end{tabular}
\caption{Method and number of samples for the toy model.}
\label{maxent:tab2}
\end{table}

To validate the quality of designs, we use NMC with 10,000 samples for each estimate of EIG in Fig.~\ref{fig:toy_val_large}. The estimated posterior entropy in Fig.~\ref{fig:toy_val_small} is obtained as follows. In our experimental setup, each method yields 20 final designs. For each of these designs, we conduct simulations resulting in 500 observed data points derived from the marginal likelihood, thus forming 500 \textit{guess} posteriors. Subsequently, kernel density estimation (KDE) is applied to these posteriors, utilizing 100 posterior samples for each, to approximate the entropies of these posteriors. The posterior entropy for each method is then computed by averaging these approximated entropies.

\subsection{PK Model}
In this application, we allocate a simulation budget of $2\times 10^6$ for optimizing the design variables for each method. For UEEG-MCMC, we use an adaptive Metropolis-Hastings (MH) method with a thinning factor of 95 to draw only one sample for the posterior sampling in Eq.~\eqref{maxent:eq4}. For ACE, we utilize the Mixture Density Network \cite{bishop1994mixture} as our chosen posterior inference network, with 3 hidden layers, 50 hidden units in each layer and ReLU activation function. For GradBED, we follow the network settings in \cite{kleinegesse2020bayesian}. Specifically, we use a one-layer connected network $T_{\psi}$ consisting of 300 hidden units and ReLU activation functions. The number of samples used to estimate a single gradient for each method is given by Table~\ref{maxent:tab2}.

\begin{table}[ht]
\centering
\begin{tabular}{|l|l|}
\hline
Method & Number of samples \\
\hline
BEEG-AP & $100(M)$ \\
UEEG-MCMC & $1(M)\times 100(\mathrm{approximated~cost~of~the~adaptive~MH}$) \\
ACE & $10(M)\times (10(N)+1)$ \\
PCE & $10(M)\times (10(N)+1)$ \\
GradBED & $100(M)$\\
\hline
\end{tabular}
\caption{Method and number of samples for PK model and STAT5 model.}
\label{maxent:tab2}
\end{table}

In the validation experiments, we use NMC with 10,000 samples for each estimate of EIG in Fig.~\ref{fig:pk_gt}. In Fig.~\ref{fig:pk_val}, the posterior entropy for each method is estimated via 1000 independent trials (see Appendix~\ref{maxent:app_toy} for the procedure). For each trial, we use KDE to estimate the \textit{guess} posterior entropy with 1000 samples.

\subsection{STAT5 Model}
The dynamics of STAT5 populations can be described by four coupled ODEs
\begin{equation}
\begin{aligned}
& \dot{x}_1=-k_1 x_1 \operatorname{EpoR}_A(t)+k_2 x_3(t-\tau) \\
& \dot{x}_2=-x_2^2+k_1 x_1 \operatorname{EpoR}_A(t) \\
& \dot{x}_3=-k_2 x_3+x_2^2 \\
& \dot{x}_4=-k_2 x_3(t-\tau)+k_2 x_3.
\end{aligned}
\end{equation}
The variables in the model are defined as follows. $x_1$ represents unphosphorylated STAT5. $x_2$ and $x_3$ represent tyrosine phosphorylated monomeric STAT5 and tyrosine phosphorylated dimeric STAT5 respectively. $x_4$ is the nuclear STAT5. $\operatorname{EpoR}_A(t)$ describes the erythropoietin receptor activity that determines the STAT5 response.

We assume that $x_1(0)=3.71$ is the only non-zero initial state of the ODES. To facilitate the solvability of the ordinary differential equations (ODEs), as in \cite{peifer2007parameter, banga2008parameter}, we apply linear interpolation to synthesize the function $\operatorname{EpoR}_A(t)$ with the original data in \cite{swameye2003identification}, and use a delay chain of length $N$ to approximate the delayed term $x_3(t-\tau)$,
\begin{equation}
\begin{gathered}
\dot{q}_1=\frac{N}{\tau}\left(\operatorname{in}(t)-q_1\right) \\
\dot{q}_2=\frac{N}{\tau}\left(q_1-q_2\right) \\
\ldots \\
\dot{q}_{N-1}=\frac{N}{\tau}\left(q_{N-2}-q_{N-1}\right) \\
\text { out }=\frac{N}{\tau}\left(q_{N-1}-\operatorname{out}(t)\right),
\end{gathered}
\end{equation}
where $N=8$, $\operatorname{in}(t)=x_3(t)$ and $\operatorname{out}(t)=x_3(t-\tau)$.

To build the sampling path that supports backpropagation through ODE solutions, we utilize the package \texttt{torchdiffeq} \cite{chen2018neuralode} to solve the ODEs with 3/8-Runge-Kutta method \cite{butcher1996history}. Linear interpolation is then applied to get the observations at any measurement times.

In this application, we allocate a simulation budget of $5\times 10^5$ for optimizing the design variables for each method. The settings for the methods involved are consistent with those employed for the PK model, and the sample size for each method is also provided in Table~\ref{maxent:tab2}. In the validation experiments, the posterior entropy for each method shown in Fig.~\ref{fig:stats5_val_0} and Fig.~\ref{fig:stats5_val_1} is estimated via 100 independent trials (see Appendix~\ref{maxent:app_toy} for the procedure). For each trial, KDE is employed with 1000 posterior samples.

\begin{figure}
    \centering
    \includegraphics[width=0.49\textwidth]{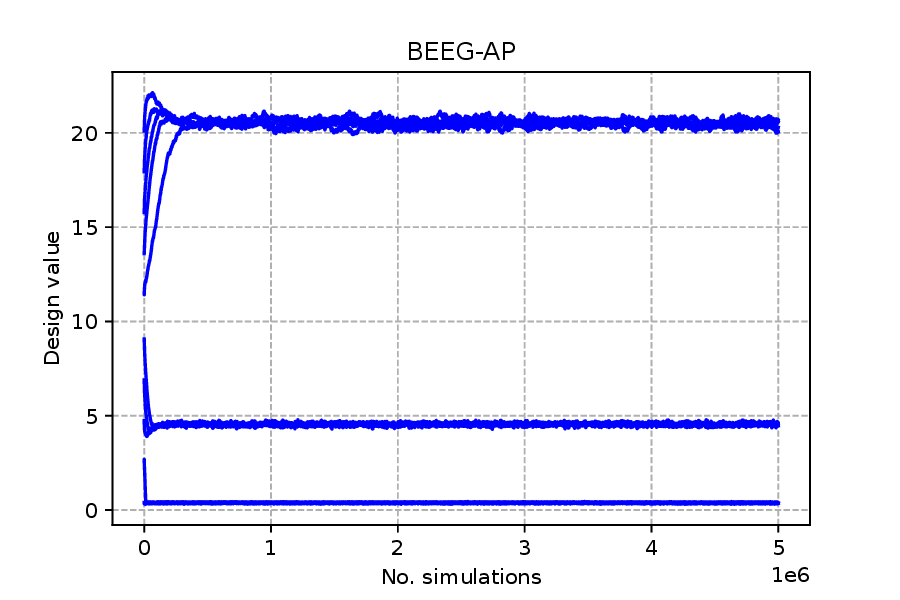}
    \includegraphics[width=0.49\textwidth]{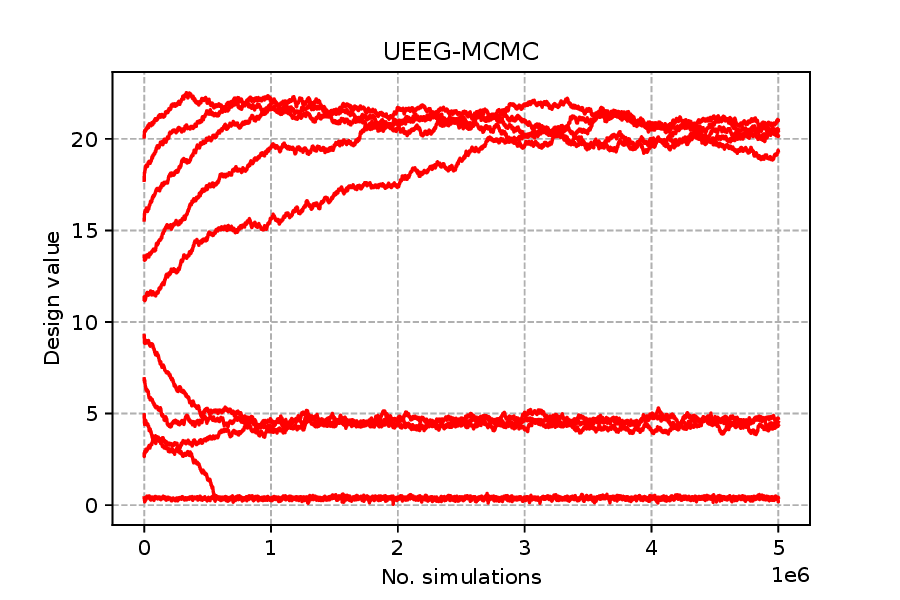}
    \includegraphics[width=0.49\textwidth]{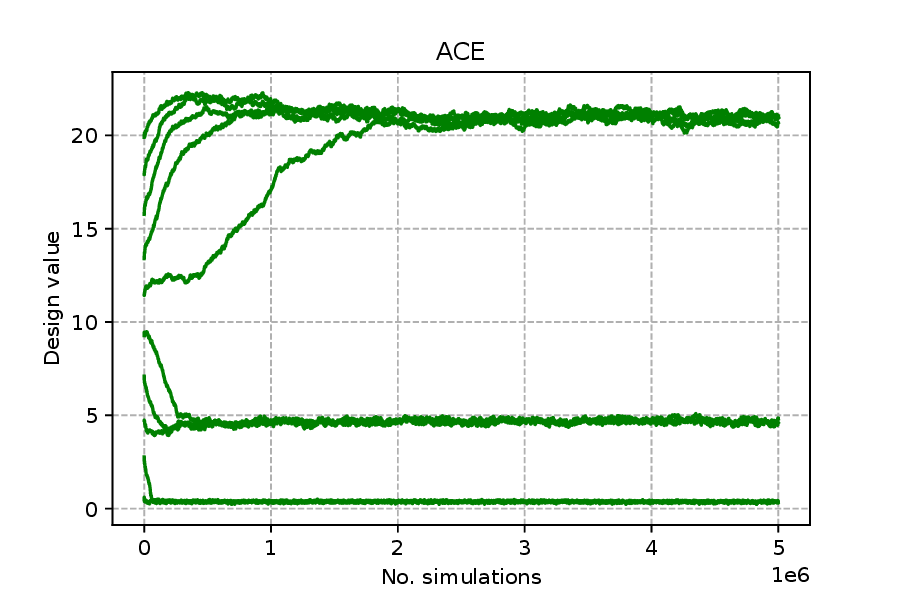}
    \includegraphics[width=0.49\textwidth]{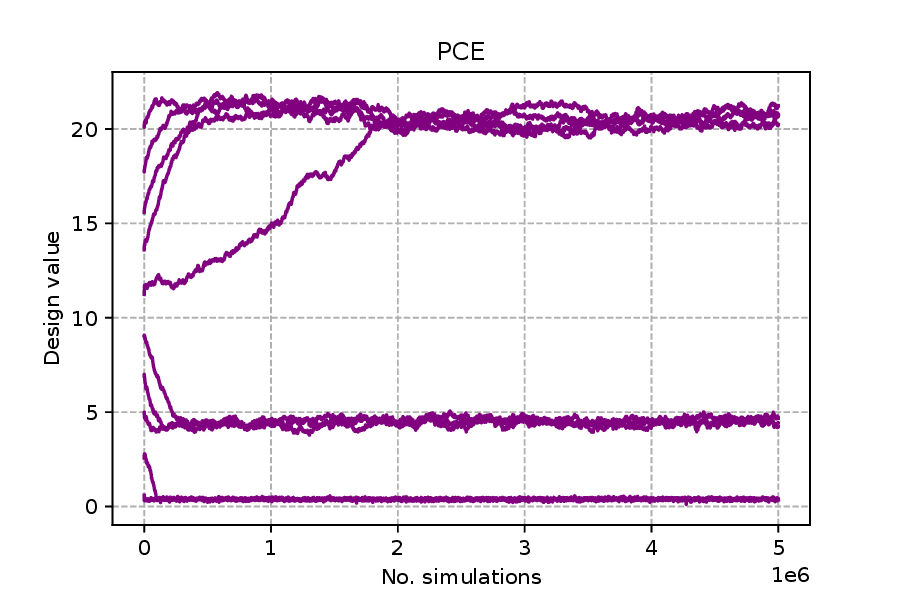}
    \includegraphics[width=0.49\textwidth]{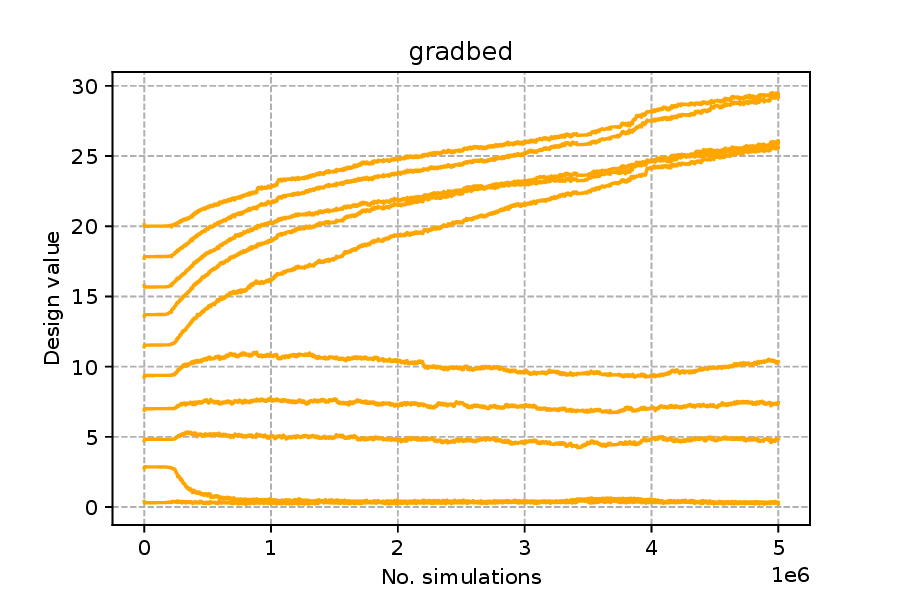}
    \caption{Convergence of the individual design dimensions for PK model with multiplicative noise $\mathcal{N}(0, 0.01)$ and additive noise $\mathcal{N}(0, 0.1)$.}
    \label{fig:pk_convergence_mix}
\end{figure}

\begin{figure}
    \centering
    \includegraphics[width=0.49\textwidth]{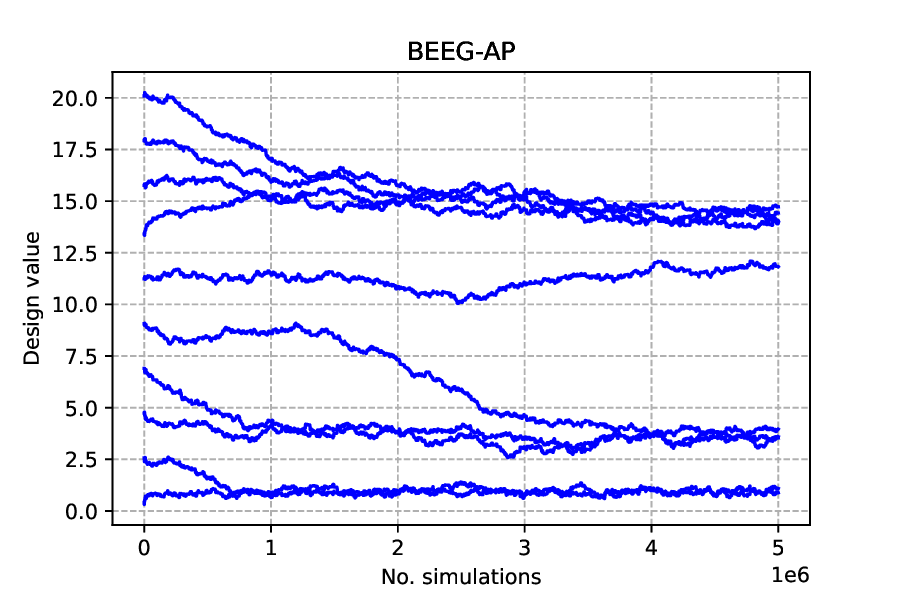}
    \includegraphics[width=0.49\textwidth]{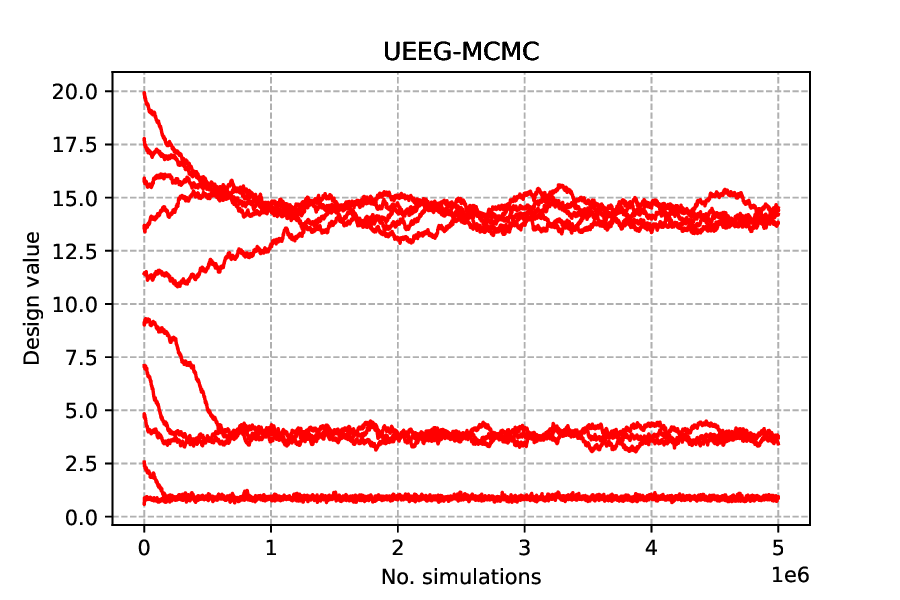}
    \includegraphics[width=0.49\textwidth]{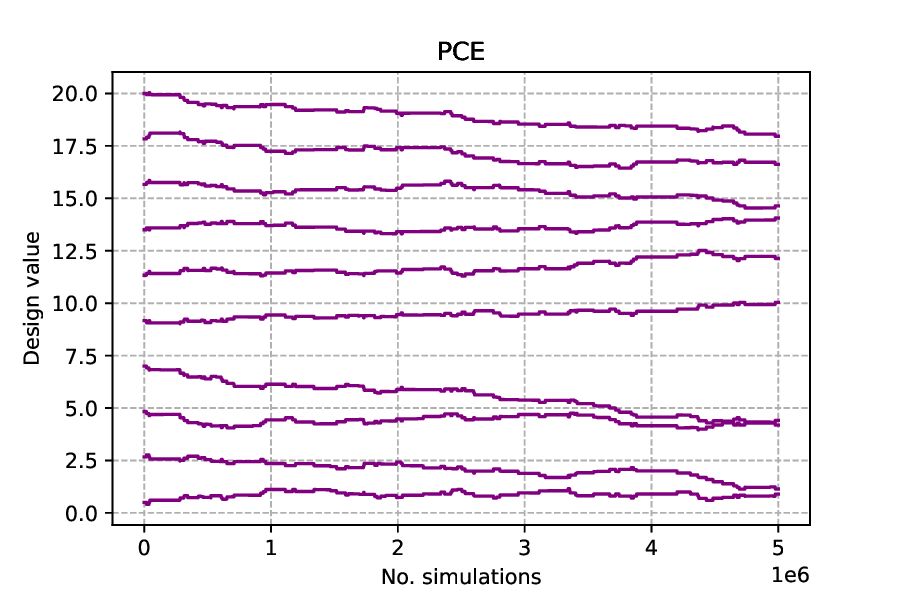}
    \includegraphics[width=0.49\textwidth]{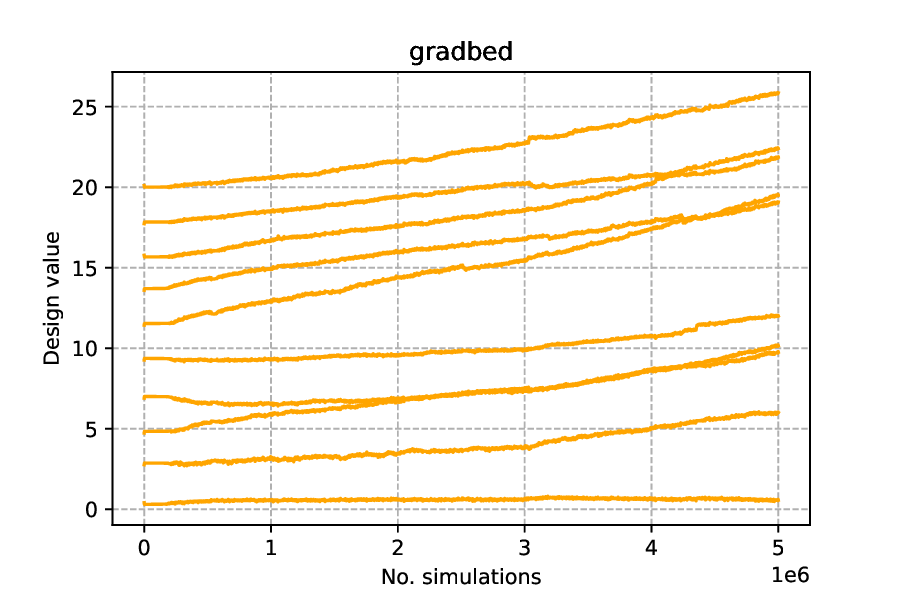}
    \caption{Convergence of the individual design dimensions for PK model with additive noise $\mathcal{N}(0, 0.001)$.}
    \label{fig:pk_convergence_add}
\end{figure}

\begin{figure}
    \centering
    \includegraphics[width=0.49\textwidth]{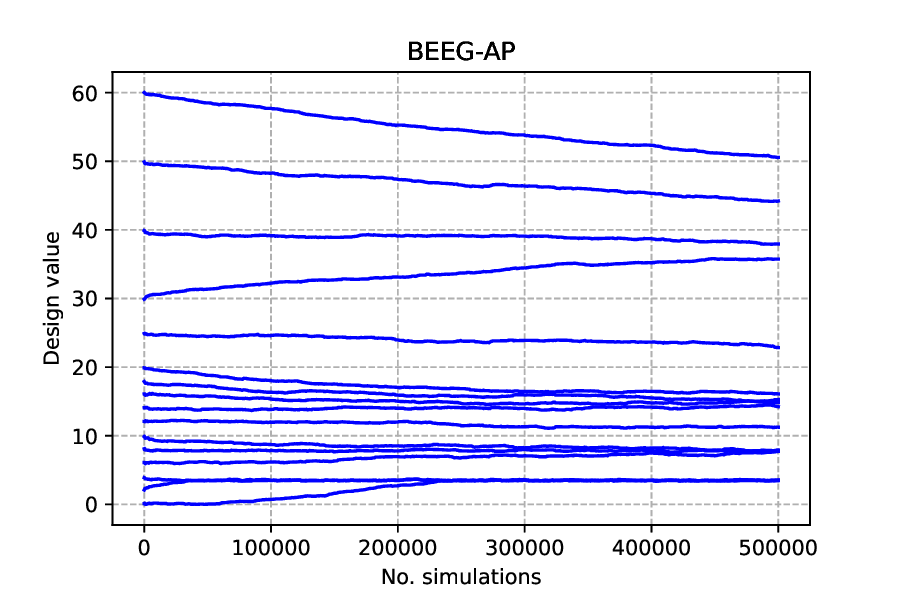}
    \includegraphics[width=0.49\textwidth]{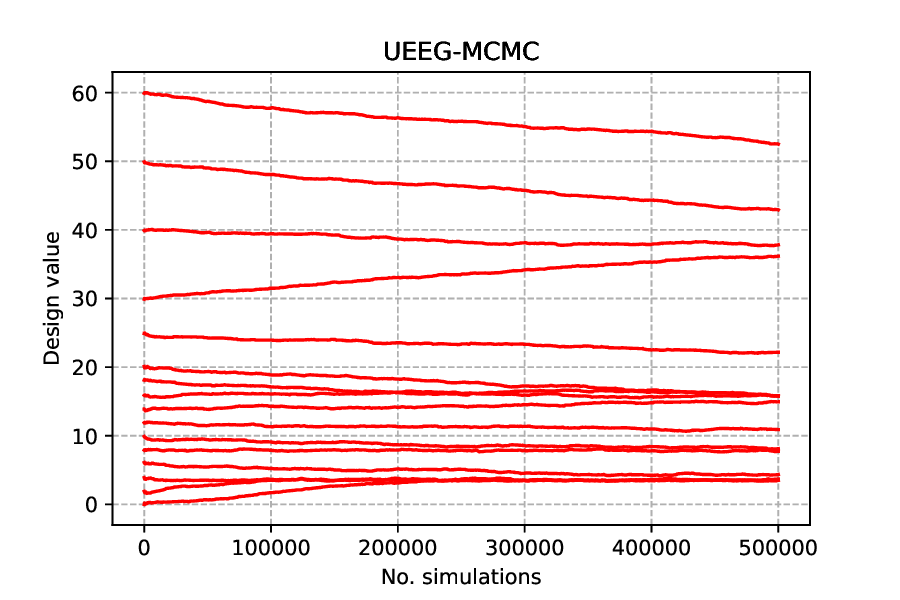}
    \includegraphics[width=0.49\textwidth]{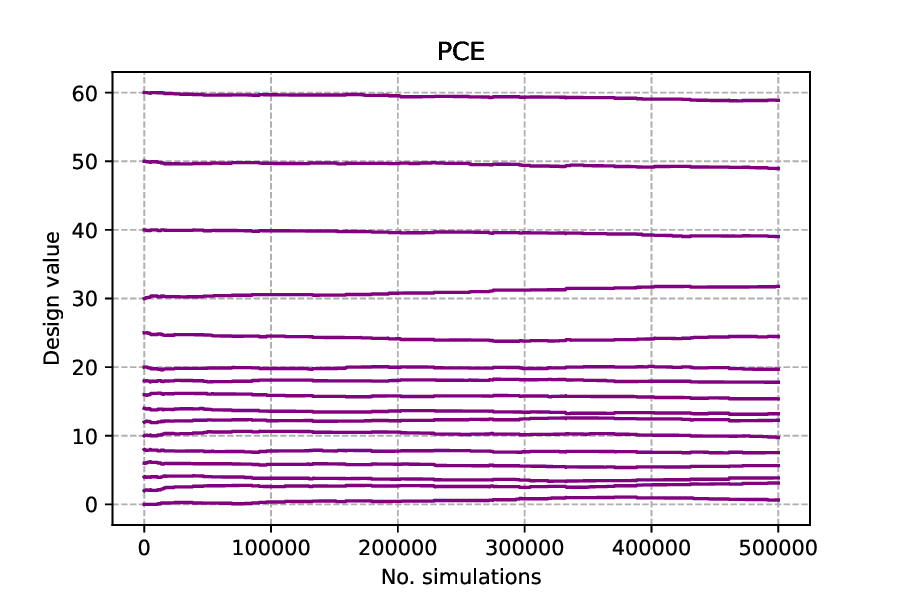}
    \includegraphics[width=0.49\textwidth]{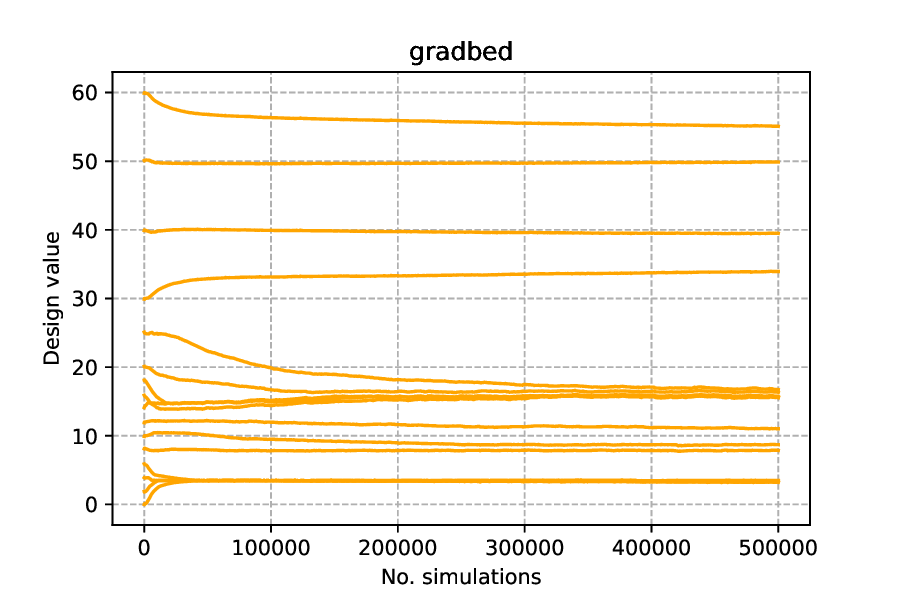}
    \caption{Convergence of the individual design dimensions for STAT5 model with additive noise $\mathcal{N}(0, 0.01)$.}
    \label{fig:stats5_convergence_add_0}
\end{figure}

\begin{figure}
    \centering
    \includegraphics[width=0.49\textwidth]{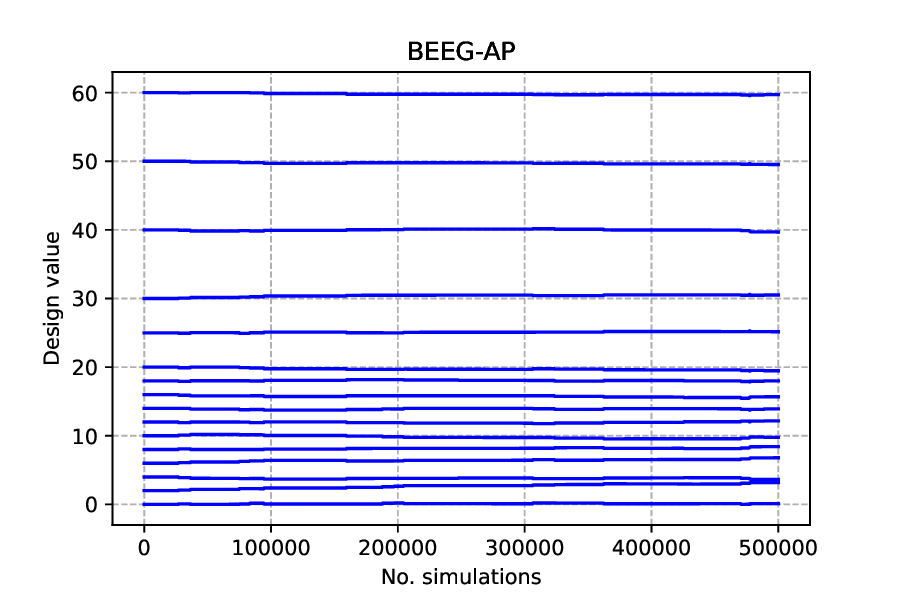}
    \includegraphics[width=0.49\textwidth]{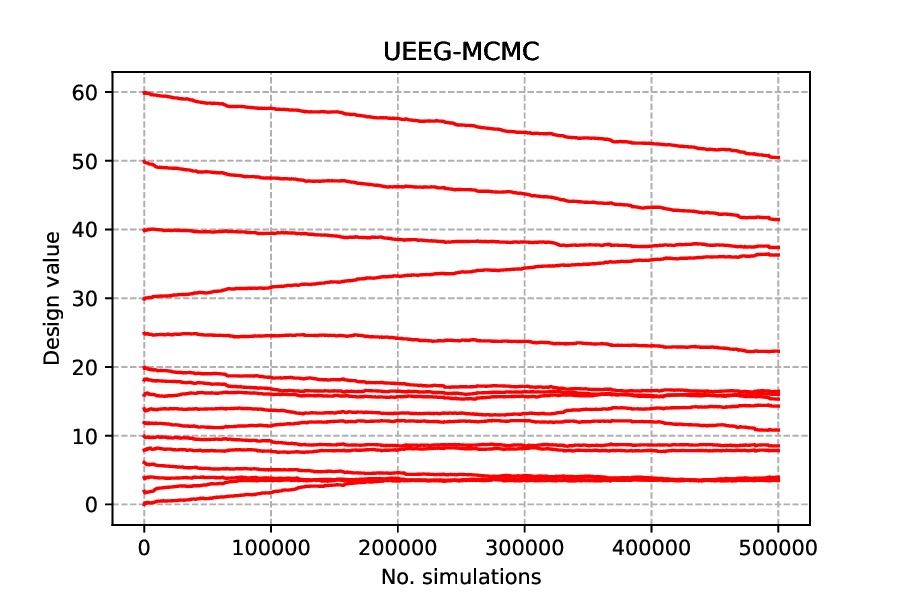}
    \includegraphics[width=0.49\textwidth]{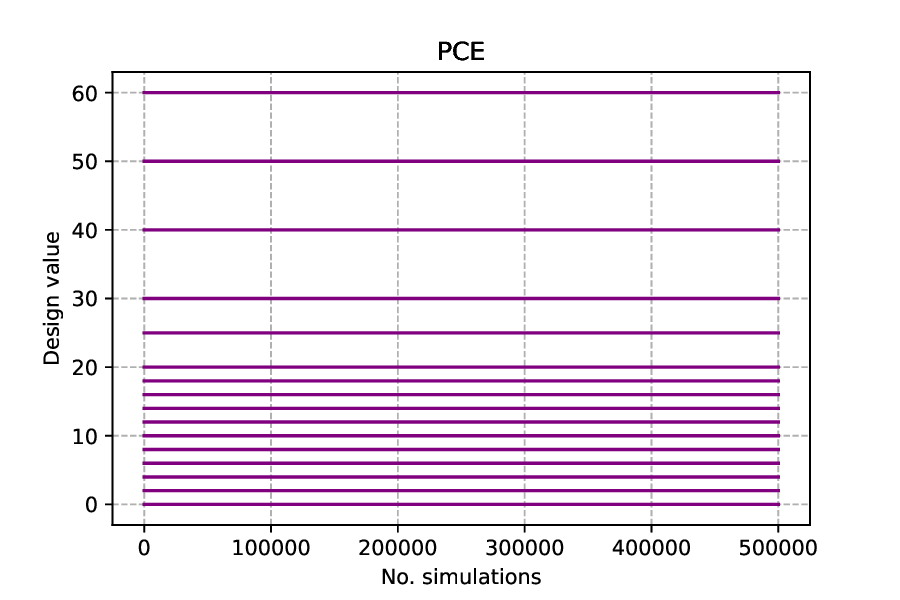}
    \includegraphics[width=0.49\textwidth]{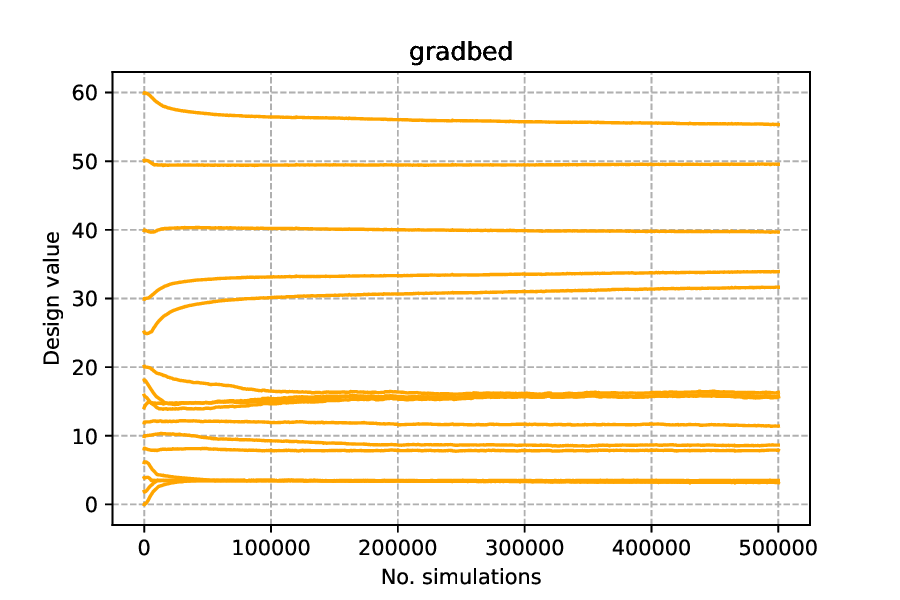}
    \caption{Convergence of the individual design dimensions for STAT5 model with additive noise $\mathcal{N}(0, 0.001)$.}
    \label{fig:stats5_convergence_add_1}
\end{figure}
\fi

\end{document}